\documentclass[11pt]{article}
\usepackage[T1]{fontenc}
\usepackage[margin=1in]{geometry}   

\usepackage{graphicx}               
\usepackage{booktabs}               
\usepackage{amsmath,amsfonts}       
\usepackage{hyperref}               
\usepackage{algorithm,algorithmic}  
\usepackage{xcolor}

\newtheorem{theorem}{Theorem}
\newtheorem{proposition}[theorem]{Proposition} 
\newtheorem{definition}[theorem]{Definition} 

\newenvironment{proof}{\noindent\textbf{Proof.}}{\hfill$\square$\medskip}

\begin{document}
	
	\title{Convex--Concave Quadratic Spectral Filtering for Graph Neural Networks\thanks{Accepted at ECML-PKDD 2026. The final authenticated publication will be available online via Springer LNCS. The work described in this paper was supported partially by the National Natural Science Foundation of China (12271111).}}
	
	\author{
		Ranhui Yan\textsuperscript{1}, 
		Jia Cai\textsuperscript{2}\thanks{The corresponding author is Jia Cai.},
		Mengzhu Chen\textsuperscript{2}, 
		Haodong Yang\textsuperscript{2} \\[2ex]
		\small \textsuperscript{1}School of Computing and Artificial Intelligence, Guangzhou Xinhua University,\\
		\small Guangzhou, Guangdong, China\\
		\small \texttt{yan\_rh1999@outlook.com}\\[2ex]
		\small \textsuperscript{2}School of Statistics and Data Science, Guangdong University of Finance and Economics,\\
		\small Guangzhou, Guangdong, China\\
		\small \texttt{jiacai1999@gdufe.edu.cn}, \texttt{mengzhuchen@student.gdufe.edu.cn}, \texttt{haodongyang@student.gdufe.edu.cn}
	}
	\date{}
\maketitle
\begin{abstract}
	Spectral graph neural networks (GNNs) interpret message passing as frequency-selective filtering. While low-order spectral filters are efficient, their limited selectivity often leads to weak attenuation outside the passband, whereas high-order alternatives introduce optimization challenges. We propose DCQ-GNN, a spectral GNN based on a compact bank of adaptive convex--concave quadratic filters. By restricting the filter order to two while explicitly exploiting complementary curvature, DCQ-GNN improves spectral selectivity as quantified by Dirichlet energy and entropy measures without resorting to high-order polynomial expansions. The model fuses filter outputs through a node-adaptive gating mechanism to enable node-wise structure-aware spectral selection. We provide a formal spectral analysis grounded in Dirichlet energy attenuation, von Neumann entropy, and curvature polarity, and derive explicit characterizations of filter behavior across varying levels of homophily and structural perturbations. Extensive benchmarks on 10 datasets show that DCQ-GNN ties for the top average rank (3.0) on heterophilic graphs and obtains the second-best rank (4.2) on homophilic graphs, remaining competitive with representative high-order polynomial spectral filters. Furthermore, under strong structural perturbations, DCQ-GNN exhibits substantially smaller performance degradation compared to both first-order and high-order baselines. These results demonstrate that curvature-aware quadratic banks provide a robust and efficient alternative to high-order spectral models while preserving optimization stability and computational efficiency.
\end{abstract}

\section{Introduction}
Graph neural networks (GNNs) have achieved strong performance across a wide range of relational learning tasks. From a spectral perspective, many GNN architectures can be interpreted as frequency-selective filters applied to the graph Laplacian, where message passing corresponds to polynomial filtering in the spectral domain. This interpretation has inspired a large body of spectral GNN models that explicitly design filter responses over graph frequencies.  Low-order spectral filters are efficient and stable but exhibit limited selectivity: attenuation outside the passband is gradual and may obscure structural signals (Fig.~\ref{fig: response LvsQ}). High-order polynomial filters achieve sharper responses but introduce optimization challenges and sensitivity to structural perturbations.
\begin{figure}[ht]
	\centering
	\includegraphics[width=0.82\linewidth]{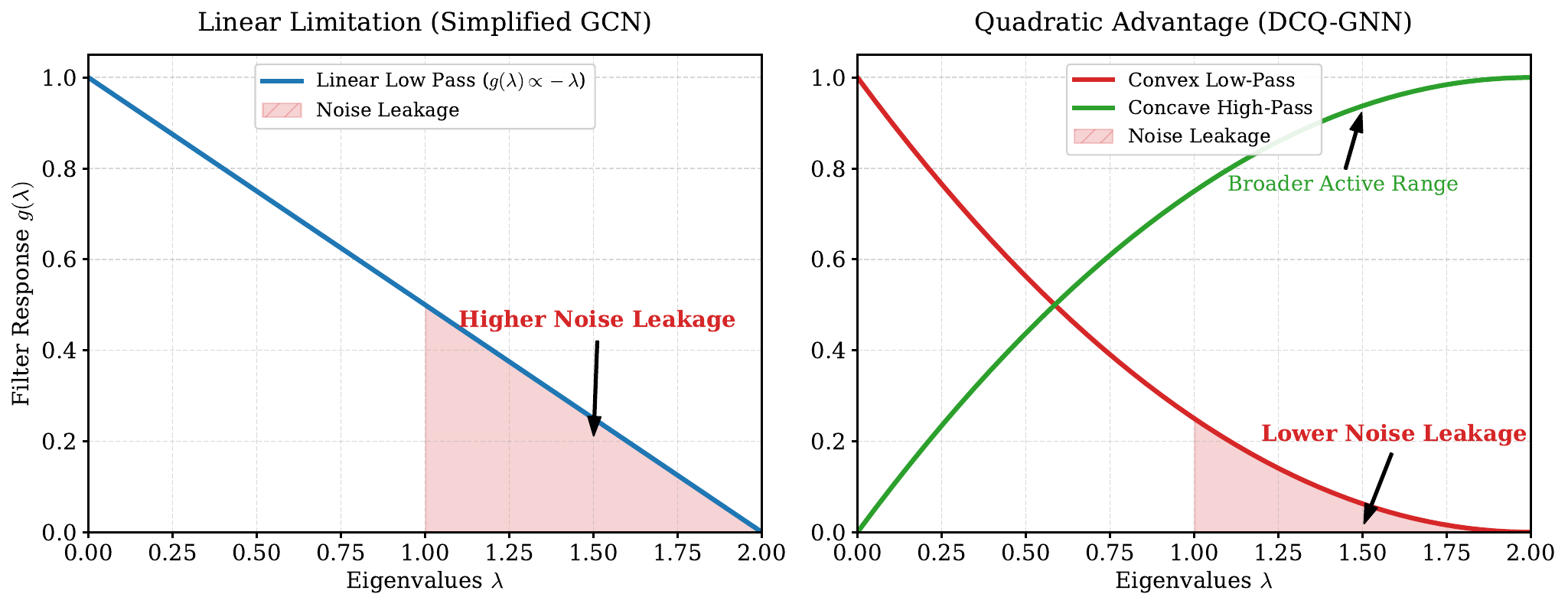}
	\caption{Spectral responses of linear and convex--concave quadratic Laplacian filters over the normalized eigenvalue range $[0,2]$. The quadratic filter exhibits sharper curvature near the cutoff region compared to the linear baseline.}
	\label{fig: response LvsQ}
\end{figure}
Existing spectral GNNs primarily pursue expressivity by increasing polynomial order or expanding the filter basis family. In contrast, we argue that the primary design axis of spectral filters need not be polynomial approximation order, but rather curvature polarity, i.e., explicit control over convexity and concavity of the spectral response. This perspective shifts the goal from universal approximation toward optimizing the stability–selectivity trade-off under low-order constraints. Curvature polarity constrains second derivative sign globally, which cannot be guaranteed in unconstrained polynomial expansions.

We propose DCQ-GNN, a spectral GNN based on a compact bank of adaptive convex--concave quadratic filters. Instead of increasing polynomial degree, DCQ-GNN fixes the filter order to two and introduces complementary curvature polarities to shape spectral attenuation. Pairing convex and concave responses yields sharper spectral transitions than linear filters without the instability of high-order polynomials.  This curvature-centric design distinguishes DCQ-GNN from basis-expansion approaches such as Bernstein- or Jacobi-polynomial filters, which improve spectral flexibility via higher-order polynomial expansions. Our approach instead leverages controlled curvature to enhance spectral selectivity per parameter, prioritizing robustness and stability over universal approximation capacity.

To enable structure-aware spectral adaptation, DCQ-GNN fuses filter outputs via a node-adaptive gating mechanism. This gating module performs localized node-wise spectral selection, allowing different nodes to emphasize convex or concave responses depending on structural context, thereby introducing node-wise spectral bias without increasing filter order. We provide a formal spectral analysis grounded in Dirichlet energy decay gap, von Neumann entropy evolution, and curvature polarity. Our analysis characterizes how convex and concave quadratic components induce distinct attenuation regimes, revealing a controllable mechanism for modulating spectral energy decay under varying homophily levels and structural perturbations. Extensive experiments on ten benchmark datasets demonstrate that DCQ-GNN achieves competitive performance across both homophilic and heterophilic graphs. Notably, under strong structural perturbations, DCQ-GNN exhibits smaller performance degradation compared to both first-order and high-order baselines, supporting our hypothesis that curvature-aware low-order filtering improves robustness.  Our contributions can be summarized as follows:
\begin{itemize}
	\item  We introduce curvature polarity as an explicit constraint for second-order spectral GNN filters. By pairing convex and concave responses, DCQ-GNN spans the full quadratic filter space characterized in Theorem~\ref{thm: app-convex_concave_spanning}, rather than selecting filters by heuristic frequency centers or bandwidths.

	\item We propose a compact convex--concave quadratic filter bank combined with node-adaptive gating.
	\item We provide a theoretical spectral interpretation via Dirichlet energy decay gap and entropy analysis.
	\item We demonstrate competitive performance and improved robustness across diverse graph regimes spanning both homophilic and heterophilic settings,  with computational costs comparable to low-order propagation.
\end{itemize}
\begin{table*}[!h]
	\centering
	\caption{Performance comparison between DCQ-GNN and representative polynomial spectral GNNs, including BernNet and JacobiConv.}
	\begin{tabular}{lccc}
		\toprule
		Criterion           &  BernNet  & JacobiConv & DCQ-GNN\\
		\midrule
		Polynomial Order    & High     & High       & 2    \\
		Curvature Control       & Implicit   &  Implicit  &  Explicit \\
		Stability              & Moderate   &  Moderate   &  High    \\
		Entropy Interpretation  & No        &  No        &  Yes     \\
		Node-Adaptive Selection &  No       &  No         & Yes     \\
		\bottomrule
	\end{tabular}
	\label{tab:compar-high}
\end{table*}
Compared to Bernstein- and Jacobi-based spectral filters that increase polynomial order to enhance flexibility, our approach instead exploits curvature polarity within a fixed second-order formulation (Table~\ref{tab:compar-high}).  Detailed comparisons with Bernstein- and Jacobi-based spectral filters are provided in Appendix~\ref{app:discussion}.
\section{The Convex--Concave Quadratic Filter GNN Architecture}
\label{sec:method}
\begin{figure}[ht]
	\centering
	\includegraphics[width=0.94\linewidth]{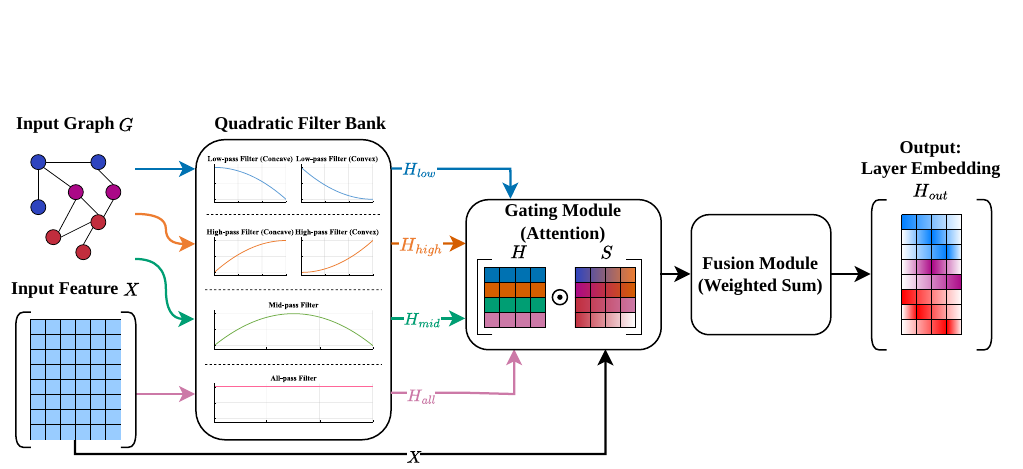}
	\caption{Architecture of DCQ-GNN. Parallel convex and concave quadratic filters extract complementary frequency components, whose outputs are fused through a node-adaptive gating mechanism with an all-pass residual anchor.}
	\label{fig:architecture}
\end{figure}
We propose DCQ-GNN, a spectral graph neural network designed to address the limited frequency selectivity of linear (first-order) Laplacian filters while preserving the computational efficiency and numerical stability of low-order message passing mechanisms.
Rather than increasing polynomial degree, DCQ-GNN enhances spectral expressivity through a compact bank of adaptive quadratic filters, whose convex--concave curvature induces complementary frequency responses within a strictly second-order formulation.

As illustrated in Fig.~\ref{fig:architecture}, each DCQ-GNN layer consists of two components:
\begin{itemize}
	\item \textbf{Adaptive convex--concave quadratic filter bank.}
	A small set of second-order spectral filters that extract low-, mid-, and high-frequency components, together with an all-pass (identity) channel. The filters are parameterized to adapt their transition regions and magnitudes, enabling frequency-aware representations across graphs with diverse homophily characteristics.
	\item \textbf{Node-adaptive gated fusion.}
	A gating mechanism that combines the filtered channels on a per-node basis, allowing the model to emphasize structure-dependent spectral responses when the structure is informative, and to fall back on the all-pass anchor when graph structure is unreliable or noisy.
\end{itemize}
We next formalize the spectral setting, describe the construction of the convex--concave quadratic filter bank, and introduce the node-adaptive fusion mechanism.
\subsection{Preliminaries}
Let $G=(\mathcal{V},\mathcal{E})$ be an undirected graph with $N$ nodes, adjacency matrix $A\in \mathbb{R}^{N \times N}$, and diagonal degree matrix $D$. We utilize the symmetrically normalized Laplacian ${\mathbf L}=I-D^{-1/2}AD^{-1/2}=U{\Lambda} U^\top$, where $0 \le \lambda_1 \le \dots \le \lambda_N \le 2$. For node features $X \in \mathbb{R}^{N \times F}$, a spectral filter $g:[0,2]\rightarrow\mathbb{R}$ is defined as $g(\mathbf L)X= U g(\Lambda)U^\top X$. To ensure computational efficiency and local message passing, we restrict $g$ to quadratic polynomials $g(\lambda)=a_0+a_1\lambda+a_2\lambda^2$ with $a_0$, $a_1$, and $a_2$ constants, which avoids explicit eigendecomposition while enabling nonlinear spectral shaping. DCQ-GNN explicitly leverages the curvature of these second-order filters to refine frequency responses without increasing the polynomial degree.

\subsection{Adaptive Convex--Concave Quadratic Filter Bank}
\label{sec:filter_bank}

We construct a compact filter bank $\mathcal{B}=\{g_{\text{low}}, g_{\text{mid}}, g_{\text{high}}, g_{\text{all}}\}$, where each $g(\lambda) $ is a quadratic polynomial on $[0,2]$. Each low- or high-pass channel contains convex--concave variants, but the gating mechanism aggregates them into four effective frequency channels. To introduce adaptivity with minimal parameterization, we employ two learnable scalar parameters:
(i) a cutoff proxy ${\tilde \lambda} = 2-\alpha$ with $\alpha\in[0,1]$, which controls the spectral location of the quadratic extremum; and
(ii) a positive scale $0<\beta\le \min\{1,{\tilde \lambda}^2/4\}$\footnote{This specific bound on $\beta$ is necessary and sufficient to guarantee  $\max_{\lambda \in [0,2]} |g(\lambda)| \le 1.$}, which controls the response magnitude and ensures well-defined curvature.

\textbf{Filter Channels.}
Let  ${\bf Z}_1= {\bf L} X$, ${\bf Z}_2= {\bf L}{\bf Z}_1={\bf L}^2 X $.  Each channel can be expressed as a linear combination of $X$, ${\bf Z}_1$, and ${\bf Z}_2$, allowing efficient implementation through sparse propagation. 
\begin{enumerate}
	\item  {\bf Low-pass channel (homophily).} Define \textbf{convex low-pass} as
	\begin{equation}\label{cvx-low}
		g_{\text{low}}^{\text{cvx}}( {\bf L}) X=\frac{\beta}{{\tilde \lambda}^2}({\bf Z}_2-2{\tilde \lambda}{\bf Z}_1+{\tilde \lambda}^2 X),
	\end{equation}
	which induces sharper attenuation in the high-frequency regime, and \textbf{concave low-pass} as
	\begin{equation}\label{ccv-low}
		g_{\text{low}}^{\text{ccv}}( {\bf L})X
		=-\frac{\beta}{{\tilde \lambda}^2}({\bf Z}_2-{\tilde \lambda}^2X),
	\end{equation}
	which maintains a broader effective passband across low-to-mid frequencies.
	
	\item   {\bf High-pass channel (heterophily).}  To emphasize heterophilic high-frequency components, we use
	\textbf{convex high-pass}
	\begin{equation}\label{cvx-high}
		g_{\text{high}}^{\text{cvx}}( {\bf L})X=\frac{\beta}{{\tilde \lambda}^2}{\bf Z}_2,
	\end{equation}
	which strongly suppresses low-frequency content, and \textbf{concave high-pass}
	\begin{equation}\label{ccv-high}
		g_{\text{high}}^{\text{ccv}}( {\bf L})X
		=-\frac{\beta}{{\tilde \lambda}^2}({\bf Z}_2-2{\tilde \lambda}{\bf Z}_1)
	\end{equation}
	to yield a wider active region in the high-frequency regime.
	
	\item  {\bf Mid-pass:} $g_{\text{mid}}( {\bf L})X
	=-\frac{4\beta}{{\tilde \lambda}^2}({\bf Z}_2-{\tilde \lambda}{\bf Z}_1)$, a non-monotone response emphasizing intermediate frequencies, which are empirically more robust to structural perturbations \cite{huang2023robust}.
	\item {\bf All-pass (identity):}
	$g_{\text{all}}({\bf L}) X = X$, which preserves raw features and serves as a structure-agnostic anchor.
\end{enumerate}

\paragraph{\textbf {Why quadratic rather than alternative structured bases?}}
	At fixed order $K=2$, Chebyshev and Jacobi bases span the same three-dimensional polynomial space as $a_0+a_1\lambda+a_2\lambda^2$ under an invertible change of basis. They therefore do not enlarge the admissible spectral response class, while making curvature polarity less explicit. Rational filters $g(\lambda)=p(\lambda)/q(\lambda)$ can realize sharper roll-offs, but possible poles introduce additional constraints on $q(\lambda)$ and complicate the uniform boundedness requirement $|g(\lambda)|\le 1$ on $[0,2]$. Wavelet- or framelet-based decompositions provide multi-scale separation, but require frame construction and scale-selection constraints that do not align naturally with the learnable cutoff $\tilde{\lambda}$. The quadratic class is therefore the lowest-degree polynomial family that simultaneously (i) encodes sign-definite curvature through the single coefficient $a_2$, (ii) admits an analytic Lipschitz stability bound under Laplacian perturbations (Theorem~\ref{app:thm-strucprtur}), and (iii) recovers a second-order Taylor form of heat diffusion, which motivates the convex low-pass branch.

The negative signs are introduced to ensure passband alignment. The curvature parameters may also be treated as hyperparameters, allowing the model to adapt to graphs with different homophily characteristics. 
Specifically, the convex low-pass filter defined in Eq. \eqref{cvx-low} yields
$$g_{\text{low}}^{\text{cvx}}(\lambda)
= \frac{\beta}{{\tilde \lambda}^2}\left(\lambda^2 - 2{\tilde \lambda}\lambda + {\tilde \lambda}^2\right),
$$
which  corresponds to a second-order Taylor approximation of the heat kernel $e^{-\tau \lambda}$. Consequently, the convex low-pass filter primarily acts as a denoising operator by attenuating high-frequency components.
On the other hand, When $\tilde{\lambda}=1$ and the maximal admissible scale $\beta=1/4$ is used, the filter pair $\{g_{\text{low}}^{\text{cvx}}, g_{\text{high}}^{\text{ccv}}\}$ constitutes a scaled partition of unity, since
$$g_{\text{low}}^{\text{cvx}}(\lambda) + g_{\text{high}}^{\text{ccv}}(\lambda) = \beta = 1/4, \quad \forall \lambda \in [0, \tilde{\lambda}].$$
This property ensures that the filter bank uniformly covers the spectral domain without frequency gaps, which is a fundamental requirement for stable signal reconstruction in graph signal processing (GSP). Unlike standard GCNs, which employ a fixed linear filter $1-\lambda$, quadratic filters enable frequency-dependent gain modulation across the spectrum. This design is closely related to quadratic mirror filters in classical signal processing, where signals are decomposed into sub-bands using filters with complementary curvatures. From a spectral geometry perspective, the quadratic term introduces controllable curvature in the frequency response, allowing the model to shape the graph signal manifold beyond the linear bias of traditional GCN filters.

\subsection{Node-Adaptive Gated Fusion}
Let $H_{\text{concat}}
=[H_{\text{low}}|H_{\text{mid}}|H_{\text{high}}|H_{\text{all}}]
\in\mathbb{R}^{N\times 4F}$. We compute node-wise gating coefficients $
S
=\mathrm{softmax}(H_{\text{concat}}W_{\text{gate}}+b_{\text{gate}})
$, 
where $W_{\text{gate}}\in\mathbb{R}^{4F\times 4}$ and $b_{\text{gate}}\in\mathbb{R}^{4}$ are learnable parameters. The layer output is
$${H_{\text{out}}}
=\sigma\left(
\sum_{k\in\{\text{low},\text{mid},\text{high},\text{all}\}}
S_k\odot(H_k{\Theta}_k)
\right),
$$
where $H_k\in{\mathbb R}^{N\times F}$ is the output of the $k$-th filter channel, ${\Theta}_k\in\mathbb{R}^{F\times F'}$ is a channel-specific linear matrix, $S_k$ denotes the $k$-th column of $S$  broadcast across feature dimensions, $\odot$ stands for elementwise multiplication, and $\sigma$ denotes an activation function. This mechanism enables a  node-wise spectral mixture model, allowing DCQ-GNN to adaptively down-weight structure-dependent channels and rely on the all-pass anchor when appropriate.
\subsection{Theoretical Analysis}
\label{sec:theoretical_analysis}
DCQ-GNN is based on the observation that quadratic spectral filters can improve frequency selectivity over linear filters while preserving low-order propagation. We analyze this property from two perspectives: high-frequency Dirichlet energy attenuation and spectral concentration. Theorem~\ref{thm:main-diri-energy-rate} shows that, for any suppression band $[\lambda^*,2]$, a bounded strictly quadratic filter can attain a lower worst-case weighted Dirichlet energy than the optimal bounded linear filter $g_\ell(\lambda)=1-a\lambda$. This result explains why the quadratic channels in DCQ-GNN can suppress high-frequency noise without increasing message-passing depth. Theorem~\ref{thm:main-entropyMajor} further shows that a normalized convex quadratic low-pass filter induces lower von Neumann entropy than its linear counterpart, indicating reduced spectral leakage within each low-pass channel before node-adaptive gated fusion.

\subsubsection{High-Frequency Dirichlet Energy Decay Gap.}\label{frequ-diri-gap}
We consider the normalized Laplacian $ \mathbf{L}$ with spectrum $\lambda \in [0,2] $.
For a graph signal $x$, its Dirichlet energy after applying a spectral filter $g$ is:
$$\mathcal{E}_g(x)=\sum_{i=1}^N\lambda_i (g(\lambda_i)\hat{x}_i)^2,$$
with $\hat{x}_i$ the $i$-th component of  $\hat{x} = U^\top x$.  We analyze worst-case attenuation over the high-frequency band
$$
\mathcal{H}_{\lambda^*} := \{\lambda : \lambda \ge \lambda^*\}, \quad \lambda^* \in (0,2).
$$
Then we have the following result.
\begin{theorem}[High-Frequency Dirichlet Energy Decay Gap ]\label{thm:main-diri-energy-rate}
	Let $\mathcal{G}_L = \{1 - a\lambda \mid a \in [0, 1]\}$ denote the class of linear spectral filters, and $\mathcal{G}_Q = \{1 - a\lambda + b\lambda^2 \mid |g(\lambda)| \leq 1, \forall \lambda \in [0, 2]\}$ denote the class of quadratic spectral filters. For any transition frequency $\lambda^* \in (0, 2)$, let $g_\ell^* \in \mathcal{G}_L$ be the optimal linear filter minimizing the maximum weighted Dirichlet energy in the high-frequency band $[\lambda^*, 2]$. Then, there exists a strictly quadratic filter $g_q \in \mathcal{G}_Q$ (where $b \neq 0$) that achieves a strictly lower energy bound:
	$$\sup_{\lambda \in [\lambda^*, 2]} \lambda |g_q(\lambda)|^2 < \sup_{\lambda \in [\lambda^*, 2]} \lambda |g_\ell^*(\lambda)|^2$$
\end{theorem}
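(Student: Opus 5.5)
The plan is a first-order perturbation argument around the optimal linear filter. We add a small quadratic term whose sign pattern lowers $\lambda g(\lambda)^2$ simultaneously at every point where the linear supremum is attained, while keeping $|g|\le 1$ on $[0,2]$.

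\textbf{Step 1 (existence and structure of the linear optimum).} Set $\Phi(a)=\sup_{\lambda\in[\lambda^*,2]}\lambda(1-a\lambda)^2$. It is continuous on $[0,1]$, so a minimizer $a^*$ exists. We have $\Phi(0)=2$, while $\Phi(1/2)\le 8/27<2$, so $a^*>0$. We also have $\Phi(a^*)>0$, because $g_\ell^*=1-a^*\lambda$ vanishes at most at one point of the band.

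Let $\varphi(\lambda)=\lambda(1-a^*\lambda)^2$ and let $M$ be its set of maximizers on $[\lambda^*,2]$. Since $\varphi$ is a cubic, $M$ is finite. Its elements lie among $\lambda^*$, $2$, and the interior critical point $1/(3a^*)$. Moreover $g_\ell^*(\mu)\neq 0$ for every $\mu\in M$, because $\varphi(\mu)=\Phi(a^*)>0$.

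\textbf{Step 2 (descent direction).} Take the perturbation $g_\varepsilon(\lambda)=g_\ell^*(\lambda)+\varepsilon h(\lambda)$, where $h(\lambda)=\lambda(c\lambda-1)$ and $\varepsilon>0$ is small. This is still of the form $1-a\lambda+b\lambda^2$, with $b=c\varepsilon\neq 0$. At $\mu\in M$ the derivative of $\lambda g_\varepsilon^2$ with respect to $\varepsilon$ is $2\mu\, g_\ell^*(\mu)h(\mu)$, so we need $h(\mu)$ to have the sign opposite to $g_\ell^*(\mu)$ for every $\mu\in M$.

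Since $g_\ell^*$ is decreasing with its single root at $1/a^*$, $g_\ell^*>0$ on the active points left of $1/a^*$ and $g_\ell^*<0$ on those to the right. We therefore choose $c=a^*$, so that $h(\lambda)=a^*\lambda(\lambda-1/a^*)$. This $h$ is negative on $(0,1/a^*)$ and positive beyond, which matches the required signs.

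At points of $[\lambda^*,2]\setminus M$ there is a uniform gap below $\Phi(a^*)$ away from small neighbourhoods of $M$; inside those neighbourhoods the strict first-order decrease persists by continuity. A standard compactness argument then gives $\sup_{[\lambda^*,2]}\lambda g_\varepsilon^2<\Phi(a^*)$ for all small $\varepsilon$.

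\textbf{Step 3 (feasibility $|g_\varepsilon|\le 1$ on $[0,2]$).} This is the step I expect to be the main obstacle, because the constraint is tight at $\lambda=0$, and also at $\lambda=2$ when $a^*=1$.

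At $\lambda=0$ we have $g_\varepsilon(0)=1$. Near $0$, $g_\varepsilon(\lambda)=1-(a^*+\varepsilon)\lambda+\varepsilon a^*\lambda^2<1$, and it stays above $-1$.

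If $a^*=1$, then $g_\ell^*(2)=-1$, while $h(2)=2(2a^*-1)>0$. The perturbation therefore pushes $g_\varepsilon(2)$ into the interior of the constraint. One must check that the order of the touching is right: $g_\varepsilon(\lambda)\ge -1$ near $\lambda=2$, where $g_\ell^*+1$ vanishes linearly.

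Away from these contact points $|g_\ell^*|<1$ strictly, so small $\varepsilon$ preserves the bound by compactness. Combining Steps 2 and 3 gives $g_q:=g_\varepsilon\in\mathcal{G}_Q$ with $b\neq 0$ and the strict inequality claimed in Theorem~\ref{thm:main-diri-energy-rate}.
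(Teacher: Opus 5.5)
Your argument is correct and reaches the result by a different route at the decisive step. Both proofs perturb $g_\ell^*$ to $1-(a^*+\delta)\lambda+b\lambda^2$ with small $\delta>0$.

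\textbf{The paper's route.} It asserts that the minimax linear filter is balanced at the two endpoints, $\lambda^*(1-a^*\lambda^*)^2=2(1-2a^*)^2$. It then checks the energy only at $\lambda^*$ and $2$, which yields the window $\delta/2<b<\delta/\lambda^*$, and dismisses interior points by a continuity remark.

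\textbf{Your route.} You allow the full active set, including the interior local maximum $1/(3a^*)$. You then choose $b=a^*\varepsilon$, which places the root of the perturbation exactly at the root $1/a^*$ of $g_\ell^*$. This gives $g_\ell^*h=-\lambda(g_\ell^*)^2<0$ at every maximizer.

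\textbf{What each buys.} Your choice is genuinely more general. For $\lambda^*=0.1$ one gets $a^*=2/3$ with active set $\{1/2,2\}$, so the endpoint balancing fails. Moreover, a member of the paper's window such as $b=\delta/0.3$ actually raises the energy at $\lambda=1/2$, so the paper's window is too large in that regime. In exchange, the paper's route gives a whole explicit window of admissible $b$ when $1/(3a^*)<\lambda^*$.

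\textbf{Simplifications.} Your filter factors as $g_\varepsilon(\lambda)=(1-a^*\lambda)(1-\varepsilon\lambda)$. For $0<\varepsilon\le 1/2$ this gives $\lambda g_\varepsilon(\lambda)^2\le(1-\varepsilon\lambda^*)^2\lambda g_\ell^*(\lambda)^2$ on $[\lambda^*,2]$, and strictness follows from $\Phi(a^*)>0$. It also gives $|g_\varepsilon|\le|g_\ell^*|\,|1-\varepsilon\lambda|\le 1$ on $[0,2]$. These two bounds replace both the compactness argument of Step 2 and the contact-point analysis of Step 3.

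Finally, the case $a^*=1$ that you single out as the main obstacle never occurs. Indeed $\Phi(1)\ge 2=\Phi(0)>8/27\ge\Phi(1/2)$, so $a^*\in(0,1)$, and the only point where $|g_\ell^*|=1$ is $\lambda=0$.
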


Theorem \ref{thm:main-diri-energy-rate} shows that quadratic filters achieve strictly stronger attenuation within the transition band $[\lambda^*, \lambda_{\max}]$ than any linear filter satisfying the same boundedness constraint. This result explains why DCQ-GNN can obtain sharper spectral discrimination without resorting to high-order polynomial expansions. 
Importantly, this improved attenuation arises within a strictly second-order formulation, preserving the stability and efficiency of low-order propagation. The quadratic term introduces controllable curvature that accelerates high-frequency decay while satisfying $|g(\lambda)| \le 1$.

\subsubsection{Spectral Selectivity via von Neumann Entropy (a measure of spectral spread).}
We quantify the spectral footprint of a filter through von Neumann entropy.
Let $P=\frac{g(\mathbf{L})^2}{\mathrm{Tr}(g(\mathbf{L})^2)}$ be the normalized power-response matrix, with eigenvalues $\{\eta_i\}_{i=1}^N$.
The entropy is $S_{\mathrm{VN}}(P)=-{\mathrm Tr}(P\ln P)$, lower values indicate a more concentrated (more selective) spectral response. Since $\mathbf L$ is diagonalizable, the entropy reduces to a spectral distribution over eigenvalues, allowing the analysis to be expressed directly in terms of the spectral response $g(\lambda)$. This leads to the following result.

\begin{theorem}[Entropy Majorization]\label{thm:main-entropyMajor}
	Let $g_1(\lambda) = \alpha_1(\lambda_{\max}-\lambda)$ and $g_2(\lambda) = \alpha_2(\lambda_{\max}-\lambda)^2$ be a linear and a convex quadratic low-pass spectral filter with $\lambda_{\max}=2$, respectively, where $\alpha_1, \alpha_2$ are normalization constants such that $\sum_i g^2(\lambda_i) = 1$. Let ${\bf p}, {\bf q}\in \mathbb{R}^N$ be the resulting normalized power spectral distributions, where $p_i = g_1^2(\lambda_i)$ and $q_i = g_2^2(\lambda_i)$. Then, the quadratic filter induces a more concentrated spectral response in the sense of von Neumann entropy:
	$$S_{\mathrm{VN}}(\bf q) \leq S_{\mathrm{VN}}(\bf p)$$ with equality if and only if all non-zero eigenvalues of the Laplacian are identical.
\end{theorem}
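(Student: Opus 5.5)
The plan is to recast both distributions as members of a single one-parameter \emph{power family} and show that its Shannon entropy is nonincreasing in the exponent. Since $P=g(\mathbf L)^2/\mathrm{Tr}(g(\mathbf L)^2)$ is diagonal in the eigenbasis $U$, $S_{\mathrm{VN}}$ reduces to the Shannon entropy of its eigenvalue vector, with the convention $0\ln 0=0$. Set $x_i:=\lambda_{\max}-\lambda_i=2-\lambda_i\ge 0$. After normalization,
\[
p_i=\frac{x_i^2}{\sum_j x_j^2},\qquad q_i=\frac{x_i^4}{\sum_j x_j^4},
\]
so the constants $\alpha_1,\alpha_2$ play no role. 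Let $I=\{i: x_i>0\}$ be the common support of ${\bf p}$ and ${\bf q}$; indices with $\lambda_i=2$ contribute zero to both entropies and can be dropped. For $s>0$ define
\[
\pi_s(i)=\frac{x_i^s}{Z(s)}\ (i\in I),\qquad Z(s)=\sum_{j\in I}x_j^s,\qquad H(s)=-\sum_{i\in I}\pi_s(i)\ln\pi_s(i).
\]
Then ${\bf p}=\pi_2$ and ${\bf q}=\pi_4$, so the claim becomes $H(4)\le H(2)$.

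\textbf{Step 1 (closed form).} Substituting $\ln\pi_s(i)=s\ln x_i-\ln Z(s)$ gives
\[
H(s)=\ln Z(s)-s\,\mathbb E_{\pi_s}[\ln x].
\]

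\textbf{Step 2 (differentiate).} Use $\frac{d}{ds}\ln Z(s)=\mathbb E_{\pi_s}[\ln x]$ and $\frac{d}{ds}\mathbb E_{\pi_s}[\ln x]=\mathrm{Var}_{\pi_s}(\ln x)$. The standard exponential-family identities then give
\[
H'(s)=-s\,\mathrm{Var}_{\pi_s}(\ln x)\le 0 .
\]
Everything here is a finite sum of smooth functions, so differentiation is legitimate.

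\textbf{Step 3 (integrate).} Integrating from $2$ to $4$ yields
\[
H(4)-H(2)=-\int_2^4 s\,\mathrm{Var}_{\pi_s}(\ln x)\,ds\le 0,
\]
which is the inequality.

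\textbf{Step 4 (equality case).} This is the step needing the most care. Because $\pi_s$ has full support $I$ for every $s$, $\mathrm{Var}_{\pi_s}(\ln x)=0$ holds for one $s$ if and only if it holds for all $s$, if and only if $x_i$ is constant on $I$. So equality holds exactly when all eigenvalues $\lambda_i<2$ coincide.

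This condition differs from the statement's wording. The Laplacian always has the eigenvalue $0$, so the precise condition is that every eigenvalue other than $\lambda_{\max}=2$ equals $0$. That is, the spectrum lies in $\{0,2\}$, which is the spectral condition under which ${\bf p}$ and ${\bf q}$ are both uniform on the same support.

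I would state this support-based condition explicitly in the proof. I would then reconcile it with the statement's phrase ``all non-zero eigenvalues are identical'' by reading it as a condition on the eigenvalues carrying non-zero filter response. Taken literally, the phrase is not equivalent: a spectrum $\{0,1,1\}$ satisfies it yet gives strict inequality. If needed, I would fall back to proving the inequality together with the corrected equality characterization.
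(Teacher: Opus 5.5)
Your proof is correct, and it reaches the inequality by a different route from the paper's. The paper does not interpolate between the two distributions. It notes that the likelihood ratio $q_i/p_i=C(2-\lambda_i)^2$ is nonincreasing in $i$, so $q_i-p_i$ changes sign at most once, from $\ge 0$ to $<0$. Hence the partial sums satisfy $\sum_{i\le k}q_i\ge\sum_{i\le k}p_i$ for every $k$. Because both vectors are already sorted in nonincreasing order, this is exactly the majorization $\mathbf{q}\succ\mathbf{p}$, and Schur-concavity of entropy finishes.

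You instead place $\mathbf{p}=\pi_2$ and $\mathbf{q}=\pi_4$ in the exponential family $\pi_s\propto x^s$ and integrate $H'(s)=-s\,\mathrm{Var}_{\pi_s}(\ln x)$. Both arguments extend to any pair of exponents $0<s_1<s_2$.

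\begin{itemize}
\item The majorization route is calculus-free and gives a stronger ordering: $\Phi(\mathbf{q})\le\Phi(\mathbf{p})$ for every Schur-concave $\Phi$ (all R\'enyi entropies, for example), not only the von Neumann entropy.
\item As written, your route handles only Shannon entropy. In exchange, it gives an exact expression for the gap, $H(2)-H(4)=\int_2^4 s\,\mathrm{Var}_{\pi_s}(\ln x)\,ds$. It also makes the equality case an immediate consequence of full support, rather than an appeal to strict Schur-concavity.
\end{itemize}

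Your equality analysis is right, and more careful than the paper's. The paper's proof only says strictness fails when one vector is a permutation of the other, which ``occurs only if the relevant eigenvalues are degenerate.'' Done carefully, the paper's route gives your condition as well. Both vectors are nonincreasing in $i$, so the permutation must be trivial, $\mathbf{q}=\mathbf{p}$. That means $2-\lambda_i$ is constant wherever it is positive, and with $\lambda_1=0$ the spectrum lies in $\{0,2\}$.

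The ``if'' direction of the stated condition is indeed false, but you should replace your example. The spectrum $\{0,1,1\}$ is not the normalized-Laplacian spectrum of any 3-vertex graph without self-loops, since $\mathrm{Tr}(\mathbf{L})=N=3$. Use $K_N$ with $N\ge 3$ instead. Its spectrum is $0$ together with $N/(N-1)$ of multiplicity $N-1$, e.g.\ $\{0,3/2,3/2\}$ for $K_3$. All its nonzero eigenvalues coincide, yet the inequality is strict.

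The same trace identity sharpens your corrected condition. A connected component on $n_c\ge 2$ vertices with spectrum in $\{0,2\}$ must satisfy $2(n_c-1)=n_c$, so $n_c=2$. An isolated vertex contributes the eigenvalue $1$, which is excluded. So equality holds exactly when the graph is a disjoint union of single edges.
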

DCQ-GNN uses a filter bank rather than a single operator. Lower entropy for an individual channel indicates reduced spectral leakage within that channel, facilitating cleaner decomposition before gated recombination.  Intuitively, the quadratic spectral decay concentrates energy more strongly near the passband. Hence, quadratic filters induce lower spectral entropy, indicating stronger concentration of spectral energy. However, the adaptive gating mechanism is what prevents this strict concentration from becoming a liability on heterophilic datasets. The proof of Theorem~\ref{thm:main-entropyMajor} is provided in Appendix~\ref{app:theory-proof}.

\subsection{Interpretive Analysis}
In this section, we analyze the robustness properties of DCQ-GNN against structural perturbations, followed by an explanation of its performance advantages. We first demonstrate how the quadratic filter bank intrinsically dampens the impact of adversarial edge injections.

\begin{proposition}[Structural Interference Cancellation] \label{prop:main-sturcInfer} 
	Let $\mathbf L$ be the symmetrically normalized Laplacian and $\hat A= I - \mathbf L$ be the normalized adjacency matrix.  Consider a target node $u$ corrupted by an adversarial edge $(u, v)$ in the regime of homophilic topological noise, where the neighbors' features of the attacker $v$  satisfy   $x_t = x_v + \xi_t$ with $\|\xi_t\| \le \epsilon$ for all $t \in \mathcal{N}(v)$. Assume the degrees of $v$ and its neighbors are approximately equal ($d_t \approx d_v$). Under the simplified concave quadratic filter $g(\mathbf L) = 2\mathbf L - {\mathbf L}^2 = I - \hat{A}^2$, the direct first-order adversarial pollution $\Delta_{GCN} = \frac{1}{\sqrt{d_u d_v}}x_v$ is perfectly neutralized by the second-order random walk component, yielding a residual representation error strictly bounded by:
	$$\|\text{Err}\| \le \epsilon \sqrt{\frac{d_u}{d_v}}.$$
	Consequently, as the local homophily around the attacker increases ($\epsilon \rightarrow 0$), the quadratic filter exactly cancels the structural interference.
\end{proposition}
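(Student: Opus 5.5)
We would prove Proposition~\ref{prop:main-sturcInfer} by a direct local computation of the perturbation that the edge $(u,v)$ induces in the filtered representation at node $u$. Write $g(\mathbf L)=I-\hat A^2$, so that $[g(\mathbf L)X]_u = x_u - [\hat A^2 X]_u$. The plan is to decompose $[\hat A^2 X]_u$ by walks of length two starting at $u$. The adversarial edge contributes in two ways. The first-order contribution is the GCN-type term $\Delta_{GCN}=\frac{1}{\sqrt{d_u d_v}}x_v$, which enters through $\hat A$. The second-order contribution comes from walks $u\to v\to t$ with $t\in\mathcal N(v)$, and equals
\[
\sum_{t\in\mathcal N(v)} \frac{1}{\sqrt{d_u d_v}}\,\frac{1}{\sqrt{d_v d_t}}\,x_t .
\]
Following the statement's framing, we isolate these attacker-dependent terms and regard all other walks, which do not traverse $(u,v)$, as the clean representation. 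The residual error is then the difference between the first-order pollution and the second-order random-walk component.

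The key step is to substitute the two assumptions into the two-hop sum. First, $d_t\approx d_v$ gives $\frac{1}{\sqrt{d_v d_t}}\approx \frac{1}{d_v}$. Since $|\mathcal N(v)|=d_v$, the two-hop term becomes
\[
\frac{1}{\sqrt{d_u d_v}}\cdot\frac{1}{d_v}\sum_{t\in\mathcal N(v)} x_t .
\]
Second, $x_t=x_v+\xi_t$ lets us split this into
\[
\frac{1}{\sqrt{d_u d_v}}\,x_v \;+\; \frac{1}{\sqrt{d_u d_v}}\,\frac{1}{d_v}\sum_{t}\xi_t .
\]
The first piece is exactly $\Delta_{GCN}$, and it is subtracted against the direct pollution, which gives the claimed cancellation. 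What remains is
\[
\text{Err}=\frac{1}{d_v\sqrt{d_u d_v}}\sum_t \xi_t .
\]
Bounding it by the triangle inequality gives $\|\text{Err}\|\le \frac{1}{\sqrt{d_u d_v}}\,\epsilon$.

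The main obstacle is reconciling this natural bound with the stated form $\epsilon\sqrt{d_u/d_v}$. The two agree only up to a factor of $d_u$, which depends on the normalization convention for the representation at $u$. We would handle this by measuring the error in the unnormalized aggregation scale, that is, multiplying through by $d_u$ (equivalently, comparing $D^{1/2}$-rescaled features). Alternatively, we would note that $\frac{1}{\sqrt{d_u d_v}}\le\sqrt{d_u/d_v}$ whenever $d_u\ge 1$, so the stated bound holds a fortiori. We would adopt the second route, since it requires no change of convention.

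The last sentence of the proposition follows immediately: letting $\epsilon\to0$ sends the bound to zero, so the interference is exactly cancelled. We would also remark that the approximation $d_t\approx d_v$ is used as an exact equality in this argument. A more careful version would add a term proportional to $\max_t\bigl|d_v^{-1}-(d_vd_t)^{-1/2}\bigr|\,\|x_t\|$.
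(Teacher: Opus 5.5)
Your proposal is correct and follows the paper's proof essentially step for step: you isolate the two-hop walks $u\to v\to t$, replace $1/\sqrt{d_v d_t}$ by $1/d_v$, split $x_t = x_v+\xi_t$ so that the leading term equals $\Delta_{GCN}$, and bound $\text{Err}$ by the triangle inequality to get $\epsilon/\sqrt{d_u d_v}$. Your closing step, $\epsilon/\sqrt{d_u d_v}\le \epsilon\sqrt{d_u/d_v}$ because $d_u\ge 1$, is a useful addition: the paper's proof stops at the tighter bound $\epsilon/\sqrt{d_u d_v}$ and never reconciles it with the form stated in the proposition.
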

Proposition~\ref{prop:main-sturcInfer} reveals that the concave quadratic filter effectively subtracts two-hop aggregated features from the original node features. The second-order term $\hat{A}^2$ provides a restorative counter-signal that neutralizes the first-order malicious injection. The assumption  in Proposition~\ref{prop:main-sturcInfer} represents a worst-case adversarial scenario where the attacker node lies inside a locally homophilic cluster. The analysis therefore isolates the structural effect of the quadratic filter rather than modeling arbitrary feature noise.
\begin{figure}[!htp]
	\centering	\includegraphics[width=0.89\linewidth]{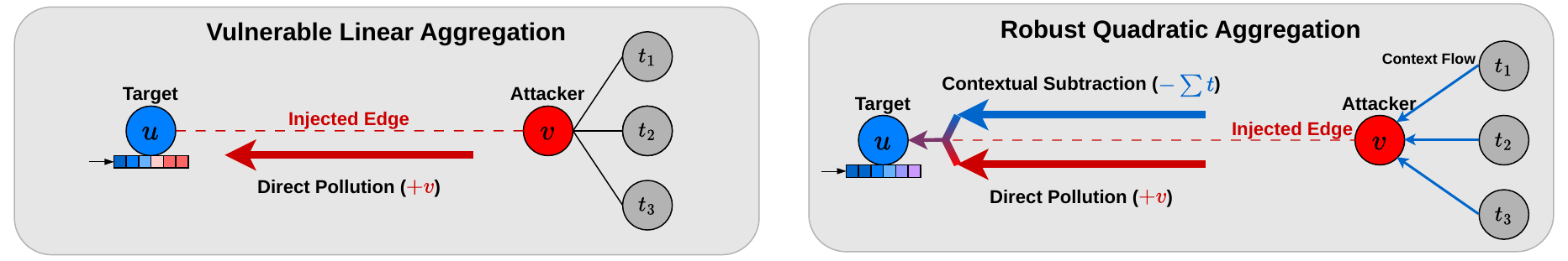}
	\caption{Illustration of structural interference cancellation. While a first-order GCN propagates adversarial noise from attacker node $v$, the quadratic term in DCQ-GNN introduces a counteracting signal that helps preserve the feature integrity of the target node $u$.}
\end{figure}

\subsubsection{Performance Explanation}
The performance advantages of DCQ-GNN arise from combining filters with complementary curvature polarities, which is described in the 
following definition. Combining curvature polarities allows the model to approximate band-pass behaviors unattainable by identical polarities. Detailed discussion is presented in Appendix  \ref{app: perfor-exp}.
\begin{definition}[Curvature Polarity]\label{def:main-curPol}
	For a second-order spectral filter $g(\lambda)$, we define its curvature polarity $\mathcal{CP}$ as the sign of its second-order derivative with respect to the eigenvalue $\lambda$:
	$$\mathcal{CP}(g) := \text{\rm sgn}\left( \frac{d^2 g(\lambda)}{d \lambda^2} \right).$$
	We say $g(\lambda)$ has positive polarity ($\mathcal{CP} > 0$) if it is strictly convex, and negative polarity ($\mathcal{CP} < 0$) if it is strictly concave.
\end{definition}
As shown in Appendix \ref{app: perfor-exp}, combining filters with opposing polarities allows DCQ-GNN to approximate non-monotonic band-pass behaviors that are unattainable by identical-polarity combinations.

\section{Experiments}
We evaluate the proposed DCQ-GNN on semi-supervised node classification tasks\footnote{Code is available in \href{github.com/yanrh1999/DCQ-GNN}{github.com/yanrh1999/DCQ-GNN}} and investigate the following research questions: (1) \textbf{RQ1 (Expressivity):} Does the proposed convex--concave quadratic filter bank provide stronger generalization capability across the homophily--heterophily spectrum compared to linear and higher-order polynomial baselines? (2) \textbf{RQ2 (Robustness):} Does the proposed structural interference cancellation mechanism enhance model stability under adversarial structural perturbations? (3) \textbf{RQ3 (Ablation Study):} How do different frequency channels (low-pass, mid-pass, high-pass, and all-pass) and the gating mechanism contribute to performance across varying graph structural regimes? (4) \textbf{RQ4 (Complexity and Performance Analysis):} Why does the quadratic filter bank provide a better efficiency–performance trade-off than learned high-order polynomial filters?

\subsection{Experimental Setup} 

\paragraph{Datasets.}
We evaluate DCQ-GNN on $10$ datasets that span a broad homophily–heterophily spectrum, including citation (CiteSeer, PubMed), co-purchase (Photo, Computers), and heterophilic graphs (Texas, Actor, Squirrel, etc.). Detailed dataset statistics and homophily measures are provided in Appendix \ref{app:experiments:data}.

\paragraph{Baselines.} 
Our method is compared against four categories of state-of-the-art GNNs: (1) Classical convolution and attention models (GCN~\cite{kipf2017semi}, GAT~\cite{velivckovic2018graph}); (2) Deep/residual architectures (JK-Net~\cite{xu2018representation},  GCNII~\cite{chen2020simple}); and (3) Representative polynomial/spectral filters (H2GCN~\cite{zhu2020beyond}, FAGCN~\cite{bo2021beyond}, BernNet~\cite{he2021bernnet}, EvenNet~\cite{lei2022evennet}, JacobiConv~\cite{wang2022powerful}), and (4) Kernel-based LHKGNN \cite{qin2025localized}. 

\paragraph{Experiment settings.}
We follow standard semi-supervised node classification protocols with $60\%/20\%/20\%$ train/validation/test splits\footnote{Partitions are generated via simple random sampling (SRS). We do not enforce class stratification or a minimum sample count per class in the training set.}. Hyperparameters for all models are optimized using Optuna \cite{optuna_2019} under equivalent search budgets to ensure a fair comparison.

\subsection{Performance on Node Classification (RQ1)} 
\begin{table*}[ht] 
	\centering
	\caption{Node classification accuracy (mean $\pm$ standard deviation) on $10$ benchmark datasets. Results are averaged over 10 independent runs. The best result is highlighted in \textbf{bold} and the second best is \underline{underlined}.}
	\resizebox{0.98\textwidth}{!}{ 
		\begin{tabular}{lcccccc} 
			\toprule
			\textbf{Datasets}
			& \textbf{Photo} & \textbf{PubMed} & \textbf{Computers} & \textbf{CiteSeer} & \textbf{WikiCS} & \textbf{Avg. Rank} \\ 
			\textbf{Homophily (Node)} & $h=0.83$ & $h=0.79$ & $h=0.78$ & $h=0.71$ & $h=0.65$ & {} \\ 
			\midrule 
			GCN & $94.10 \pm 0.40$ & $88.53 \pm 0.28$ & $90.64 \pm 0.38$ & $76.03 \pm 1.13$ & $82.02 \pm 0.69$ & 9.0 \\
			GCNII & $\mathbf{95.80 \pm 0.50}$ & $\mathbf{90.32 \pm 0.30}$ & $\mathbf{92.19 \pm 0.39}$ & $76.23 \pm 1.83$ & $\underline{84.24 \pm 0.72}$ & $\mathbf{1.8}$ \\
			GAT & $94.61 \pm 0.48$ & $88.12 \pm 0.48$ & $90.16 \pm 0.89$ & $75.82 \pm 0.95$ & $84.23 \pm 0.82$ & 7.4 \\
			JK-Net & $94.33 \pm 0.57$ & $88.49 \pm 0.46$ & $91.50 \pm 0.55$ & $76.47 \pm 1.25$ & $83.43 \pm 1.14$ & 6.4 \\
			H2GCN & $94.92 \pm 0.53$ & $89.52 \pm 0.47$ & $90.78 \pm 0.57$ & $73.33 \pm 0.96$ & $83.40 \pm 0.76$ & 7.2 \\
			EvenNet & $94.32 \pm 0.43$ & $89.56 \pm 0.43$ & $\underline{91.73 \pm 0.57}$ & $73.96 \pm 1.12$ & $83.47 \pm 0.73$ & 6.0 \\
			FAGCN & $95.24 \pm 0.31$ & $89.21 \pm 0.55$ & $91.25 \pm 0.78$ & $73.30 \pm 0.61$ & $83.58 \pm 1.03$ & 6.2 \\
			JacobiConv & $94.83 \pm 0.23$ & $89.44 \pm 0.10$ & $91.20 \pm 0.18$ & $74.75 \pm 1.21$ & $83.10 \pm 0.47$ & 7.2 \\
			BernNet & $94.17 \pm 0.48$ & $89.46 \pm 0.33$ & $89.56 \pm 0.78$ & $\mathbf{76.73 \pm 0.97}$ & $83.80 \pm 0.75$ & 6.2 \\
			LHK-GNN & $\underline{95.65 \pm 0.37}$ & $88.71 \pm 0.38$ & $91.27 \pm 0.72$ & $\underline{76.55 \pm 1.16}$ & $83.56 \pm 0.64$ & 4.4 \\ \hline
			\textbf{DCQ-GNN (ours)} & $95.07 \pm 0.51$ & $\underline{89.62 \pm 0.49}$ & $90.90 \pm 0.50$ & $74.89 \pm 1.25$ & $\mathbf{85.50 \pm 0.82}$ & $\underline{4.2}$ \\ \hline
			\toprule 
			\textbf{Datasets} & \textbf{Wisconsin} & \textbf{Actor} & \textbf{Chameleon} & \textbf{Squirrel} & \textbf{Texas} & \textbf{Avg. Rank} \\ 
			\textbf{Homophily (Node)} & $h=0.17$ & $h=0.15$ & $h=0.10$ &$h=0.08$ & $h=0.06$ & {} \\ 
			\midrule 
			GCN & $47.20 \pm 5.60$ & $27.45 \pm 1.02$ & $41.34 \pm 2.08$ & $29.13 \pm 1.65$ & $56.76 \pm 7.74$ & 10.6 \\
			GCNII & $78.60 \pm 6.87$ & $\mathbf{37.43 \pm 1.38}$ & $52.90 \pm 1.30$ & $36.73 \pm 1.18$ & $80.00 \pm 7.07$ & 6.0 \\
			GAT & $47.80 \pm 6.16$ & $29.14 \pm 1.18$ & $44.29 \pm 2.43$ & $30.62 \pm 0.84$ & $55.68 \pm 6.42$ & 9.8 \\
			JK-Net & $49.40 \pm 4.57$ & $30.53 \pm 1.03$ & $42.29 \pm 2.20$ & $27.74 \pm 1.87$ & $57.30 \pm 8.95$ & 9.6 \\
			H2GCN & $80.80 \pm 2.54$ & $34.76 \pm 1.06$ & $58.91 \pm 1.06$ & $37.40 \pm 1.39$ & $74.47 \pm 6.04$ & 6.8 \\
			EvenNet & $81.80 \pm 2.82$ & $34.36 \pm 0.65$ & $\underline{65.02 \pm 1.77}$ & $49.71 \pm 0.85$ & $\mathbf{83.46 \pm 2.04}$ & 3.8 \\
			FAGCN & $\underline{83.20 \pm 4.40}$ & $32.62 \pm 1.44$ & $60.47 \pm 3.09$ & $47.42 \pm 1.83$ & $80.25 \pm 3.32$ & 5.4 \\
			JacobiConv & $75.60 \pm 5.45$ & $35.80 \pm 0.72$ & $\mathbf{66.34 \pm 1.06}$ & $\mathbf{53.17 \pm 0.93}$ & $78.91 \pm 3.78$ & 4.0 \\
			BernNet & $\mathbf{85.20 \pm 1.40}$ & $35.44 \pm 0.92$ & $61.19 \pm 0.94$ & $50.82 \pm 0.69$ & $\underline{83.24 \pm 4.32}$ & $\underline{3.4}$ \\ 
			LHK-GNN & $81.40 \pm 4.64$ & $\underline{36.45 \pm 1.29}$ & $64.58 \pm 1.37$ & $51.41 \pm 0.66$ & $82.62 \pm 4.81$ & 3.6 \\ \hline
			\textbf{DCQ-GNN (ours)} & $82.20 \pm 4.77$ & $35.63 \pm 1.39$ & $64.88 \pm 2.31$ & $\underline{52.26 \pm 1.49}$ & $82.97 \pm 5.55$ & $\mathbf{3.0}$ \\
			\bottomrule 
		\end{tabular}
	}
	\label{tab:main_results} 
\end{table*}
Table~\ref{tab:main_results} reports node-classification accuracy on $10$ datasets covering homophilic and heterophilic regimes. On homophilic datasets, DCQ-GNN obtains the second-best average rank ($4.2$), outperforming polynomial spectral baselines such as EvenNet ($6.0$) and BernNet ($6.2$). Although GCNII ranks first ($1.8$), DCQ-GNN remains competitive, ranking first on WikiCS and second on PubMed. These results indicate that curvature-aware quadratic filtering preserves homophilic smoothing while enabling node-wise spectral adaptation.

On heterophilic datasets, DCQ-GNN matches the best average rank ($3.0$) and outperforms high-order spectral baselines including BernNet ($3.4$), EvenNet ($3.8$), and JacobiConv ($4.0$). On Squirrel, DCQ-GNN attains the second-best accuracy ($52.26\%$), slightly below JacobiConv ($53.17\%$). This gap suggests that Squirrel may require finer spectral resolution than globally parameterized quadratic channels can provide. Nevertheless, DCQ-GNN achieves competitive heterophilic performance with a strictly second-order formulation, whereas JacobiConv and BernNet rely on higher-degree polynomial expansions to model complex high-frequency components. These results support the efficiency--resolution trade-off underlying DCQ-GNN: curvature-controlled quadratic filters offer robust performance across homophily regimes, while highly irregular datasets may benefit from piecewise or higher-resolution spectral parameterizations.

\subsection{Robustness Analysis (RQ2)} 
\begin{table}[h] 
	\centering 
	\caption{Accuracy degradation under the DICE structural attack on \textbf{WikiCS}. $\downarrow \Delta\%$ indicates the relative performance drop compared to the clean graph.}
	\resizebox{\textwidth}{!}{
		\begin{tabular}{lcccc} 
			\toprule 
			& \multicolumn{4}{c}{\textbf{DICE Perturbation Ratio}} \\ \cmidrule(lr){2-5} \textbf{Model} & \textbf{0\% (Clean)} & \textbf{1\%} & \textbf{5\%} & \textbf{10\%} \\ 
			\midrule 
			GCN & $82.02 \pm 0.69$ & $75.74 \pm 1.57 (\downarrow7.66\%)$ & $67.58 \pm 1.66 (\downarrow17.60\%)$ & $60.59 \pm 1.99 (\downarrow26.13\%)$ \\ 
			GAT & $84.23 \pm 0.82$ & $81.75 \pm 0.87 (\downarrow2.95\%)$ & $74.67 \pm 0.93 (\downarrow11.35\%)$ & $68.07 \pm 1.61 (\downarrow19.18\%)$ \\ 
			FAGCN & $83.58 \pm 1.03$ & $79.21 \pm 1.06 (\downarrow5.23\%)$ & $72.16 \pm 1.24 (\downarrow13.66\%)$ & $62.87 \pm 1.18 (\downarrow24.78\%)$ \\ 
			BernNet & $83.80 \pm 0.75$ & $77.01 \pm 1.00 (\downarrow8.10\%)$ & $74.44 \pm 1.73 (\downarrow11.17\%)$ & $71.87 \pm 2.71 (\downarrow14.24\%)$ \\ 
			JacobiConv & $83.10 \pm 0.47$ & $80.17 \pm 0.91 (\downarrow3.52\%)$ & $76.86 \pm 1.26 (\downarrow7.51\%)$ & $69.74 \pm 1.06 (\downarrow16.08\%)$ \\  		
			\hline
			\textbf{DCQ-GNN (ours)} & $85.50 \pm 0.82$ & $\mathbf{84.30 \pm 0.83 (\downarrow1.40\%)}$ & $\mathbf{80.40 \pm 0.78 (\downarrow5.96\%)}$ & $\mathbf{77.58 \pm 0.68 (\downarrow9.26\%)}$ \\
			\bottomrule 
	\end{tabular}}
	\label{tab:dice_performance} 
\end{table}
To evaluate robustness against adversarial topological noise, we conduct experiments on the WikiCS dataset (moderate node homophily, $h=0.65$) to the DICE \cite{zugner2018adversarial} structural attack. Table~\ref{tab:dice_performance} reports the node classification accuracy and the corresponding relative performance degradation under increasing edge perturbation ratios ($1\%$, $5\%$, and $10\%$). 

Across all attack  levels, DCQ-GNN consistently exhibits the highest resilience. Under the most extreme scenario ($10\%$ perturbation), DCQ-GNN incurs a minimal relative decrease of only $9.26\%$. In  contrast, standard spatial methods like GCN and FAGCN suffer severe degradation, dropping by $26.13\%$ and $24.78\%$, respectively. Furthermore, DCQ-GNN proves significantly more stable than advanced spectral baselines under the same conditions, outperforming JacobiConv, which experiences a $16.08\%$ performance drop.  These results validate the structural interference cancellation mechanism. The concave quadratic component incorporates two-hop contexts, dampening spurious adversarial edges lacking local support.

\subsection{Ablation Study (RQ3)} 
\begin{table}[h] 
	\centering 
	\caption{Ablation study showing the impact of removing individual channels on the test set. ``w/o'' denotes removal of the corresponding component. Values report accuracy $\pm$ standard deviation, with relative changes in parentheses.}
	\resizebox{\textwidth}{!}{
		\begin{tabular}{lcccc} 
			\toprule 
			\textbf{Datasets} & \textbf{Computers} & \textbf{WikiCS} & \textbf{Wisconsin} & \textbf{Squirrel} \\ 
			\midrule 
			\textbf{Full Model} & $90.90 \pm 0.50$ & $85.50 \pm 0.82$ & $82.20 \pm 4.77$ & $52.26 \pm 1.49$ \\ 
			\midrule 
			w/o low-pass & $82.82 \pm 3.40 (\downarrow 8.89\%)$ & $82.67 \pm 1.01 (\downarrow 3.31\%)$ & $77.80 \pm 3.76 (\downarrow 5.35\%)$ & $50.49 \pm 1.03 (\downarrow 3.38\%)$ \\ 
			w/o mid-pass & $91.22 \pm 0.44 (\uparrow 0.35\%)$ & $84.41 \pm 1.33 (\downarrow 1.28\%)$ & $76.40 \pm 5.56 (\downarrow 7.06\%)$ & $49.58 \pm 2.02 (\downarrow 5.12\%)$ \\
			w/o high-pass & $91.05 \pm 0.36 (\uparrow 0.17\%)$ & $84.48 \pm 1.57 (\downarrow 1.19\%)$ & $71.20 \pm 5.87 (\downarrow 13.38\%)$ & $46.01 \pm 2.53 (\downarrow 11.96\%)$ \\ 
			w/o all-pass & $90.85 \pm 0.56 (\downarrow 0.05\%)$ & $84.17 \pm 1.19 (\downarrow 1.56\%)$ & $73.60 \pm 4.80 (\downarrow 10.46\%)$ & $51.86 \pm 2.21 (\downarrow 0.77\%)$ \\ 
			w/o gating module & $83.82\pm0.64(\downarrow7.90\%)$ & $84.18\pm0.72(\downarrow1.54\%)$ & $79.00\pm5.23(\downarrow3.89\%)$ & $46.51\pm1.98(\downarrow11.00\%)$ \\
			fixed $\alpha, \beta$ & $90.12\pm0.59(\downarrow0.86\%)$ &  $85.91\pm0.66(\uparrow0.48\%)$  & $80.60\pm7.61(\downarrow1.95\%)$ & $48.77\pm1.77(\downarrow6.68\%)$ \\
			\bottomrule
	\end{tabular}}
	\label{tab:ablation} 
\end{table} 

\paragraph{Spectral Adaptivity.}
Table~\ref{tab:ablation} quantifies the contribution of each spectral channel. On the homophilic Computers dataset, removing the low-pass channel causes the largest degradation among channel ablations ($\downarrow 8.89\%$), indicating that local smoothing remains the dominant signal in this regime. In contrast, removing the mid-pass or high-pass channel yields small gains on Computers ($\uparrow 0.35\%$ and $\uparrow 0.17\%$), suggesting that higher-frequency components may introduce redundant or noisy information on strongly homophilic graphs. On heterophilic datasets, the high-pass channel becomes essential: its removal causes the largest channel-level drops on Wisconsin ($\downarrow 13.38\%$) and Squirrel ($\downarrow 11.96\%$). These results support the intended role of the filter bank: low-pass components capture homophilic smoothing, whereas high-pass components preserve discriminative signals when adjacent nodes often belong to different classes.

\paragraph{Role of the All-pass Channel.}
The all-pass channel preserves raw node features and provides a structure-agnostic reference. Its removal has almost no effect on Computers ($\downarrow 0.05\%$), where neighborhood aggregation is reliable, and only a small effect on Squirrel ($\downarrow 0.77\%$), where other spectral channels can partially compensate. The largest drop occurs on Wisconsin ($\downarrow 10.46\%$), indicating that direct feature preservation is critical when topology-driven aggregation is unstable or weakly aligned with labels. Thus, the all-pass channel acts as a fallback pathway, but its contribution depends on the relative informativeness of node attributes and graph structure.

\begin{figure}
	\centering
	\includegraphics[width=0.8\linewidth]{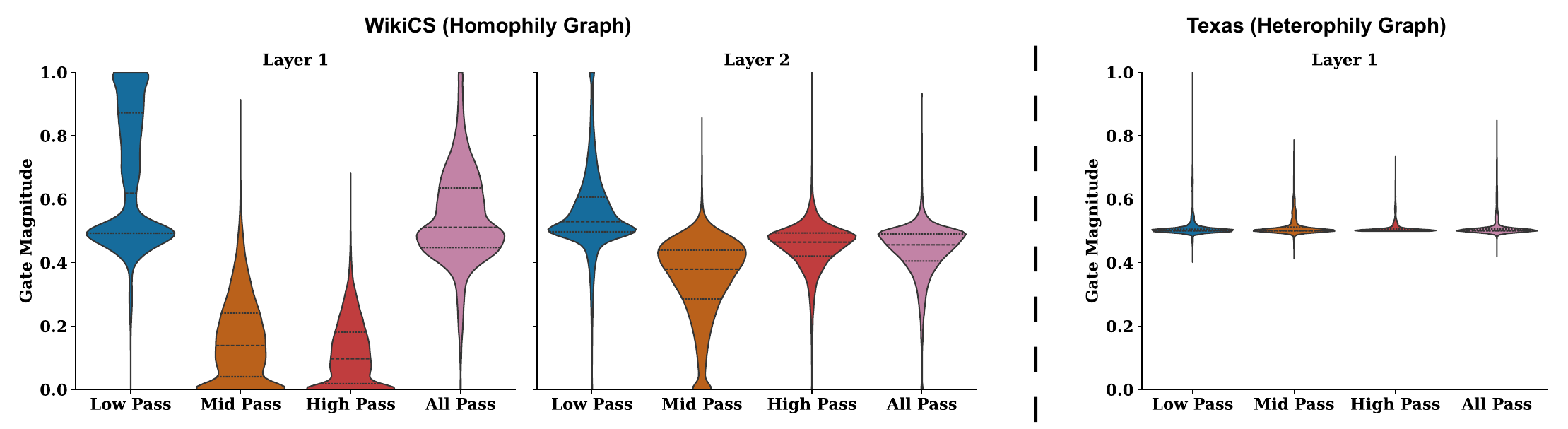}
	\caption{Distribution of learned filter gate magnitudes on the WikiCS and Texas datasets.}
	\label{fig:gate_distribution}
\end{figure}
\begin{figure}
	\centering
	\includegraphics[width=0.8\linewidth]{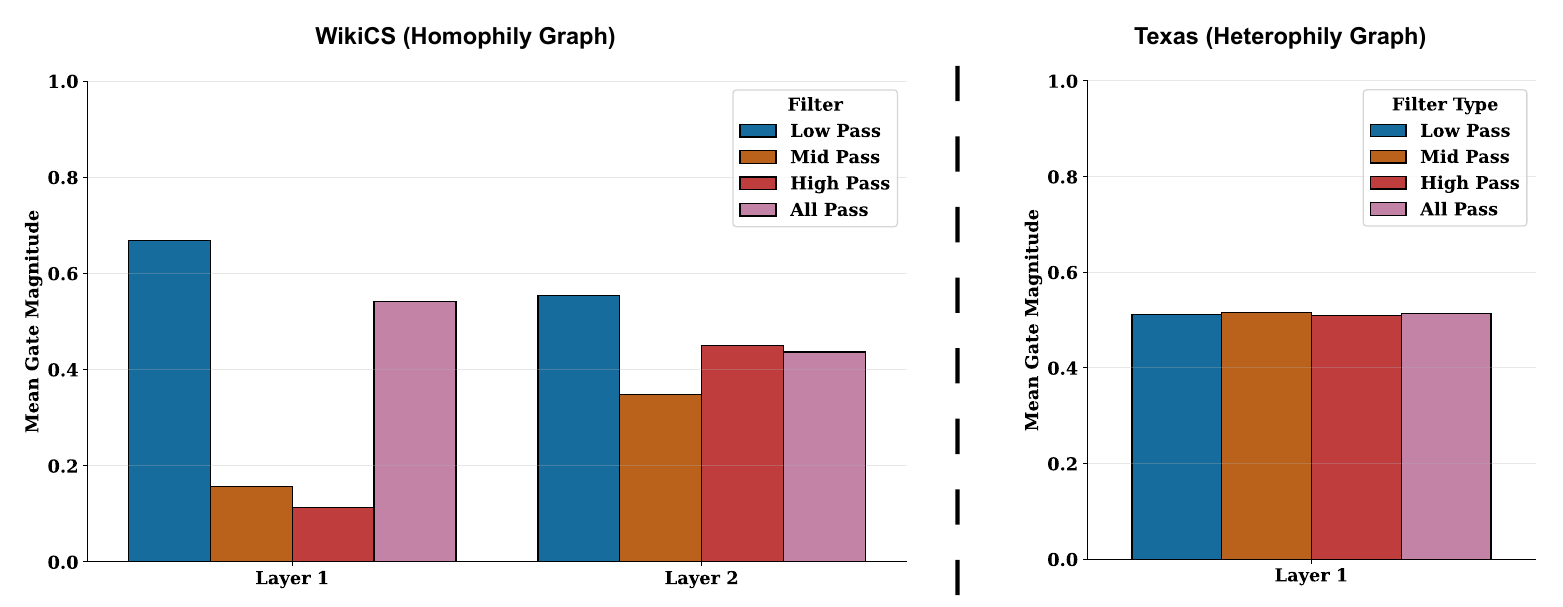}
	\caption{Mean filter gate magnitude across frequency channels on the WikiCS and Texas datasets.}
	\label{fig:gate_activation}
\end{figure}
\paragraph{Impact of the Gating Module.}
Removing the gating module degrades performance on all four datasets, with the largest drops on Squirrel ($\downarrow 11.00\%$) and Computers ($\downarrow 7.90\%$). This confirms that a fixed global mixture of spectral channels is insufficient for graphs with heterogeneous local structures. On Squirrel, the degradation caused by removing the gate is close to that caused by removing the high-pass channel ($\downarrow 11.96\%$), indicating that node-wise gating is needed to route high-frequency information effectively. Figures~\ref{fig:gate_distribution} and~\ref{fig:gate_activation} further illustrate this behavior: on the more homophilic WikiCS dataset, the gate assigns larger weights to low- and all-pass channels, whereas on the heterophilic Texas dataset, the gate distribution becomes more balanced across frequency channels. This pattern is consistent with the model's need to broaden its spectral mixture under stronger heterophily.

\paragraph{Functional Impact of Learnable Filter Parameters.}
The learnable parameters $\alpha$ and $\beta$ control the cutoff proxy and response scale of the quadratic filters. Fixing them has limited effect on Computers ($\downarrow 0.86\%$), Wisconsin ($\downarrow 1.95\%$), and even improves WikiCS slightly ($\uparrow 0.48\%$), suggesting that these datasets do not always require fine adjustment of the filter shape. In contrast, the same fixed-parameter setting causes a clear drop on Squirrel ($\downarrow 6.68\%$), where the spectral structure is more irregular. Thus, learning $\alpha$ and $\beta$ is most beneficial when the graph requires dataset-specific spectral boundaries, while its effect can be modest on graphs whose useful frequency bands are easier to separate.

\subsection{Complexity and Performance Analysis (RQ4)}
\begin{table}[h]
	\centering
	\caption{Computational efficiency versus performance on the heterophilic \textbf{Actor} dataset. DCQ-GNN achieves a favorable trade-off between training time (ms per epoch) and accuracy across varying model depths $T$.}
	\begin{tabular}{lccc}
		\toprule
		\textbf{Models} & \textbf{$\#$Parameters} & \textbf{Time} (ms) & \textbf{Accuracy} (\%) \\
		\midrule
		JacobiConv & $7,087,931$ & $12.30$ & $\mathbf{35.80}$ \\
		BernNet & $60,058$ & $8.15$ & $35.44$ \\
		\midrule
		DCQ-GNN ($T=5$) & $165,352$ & $9.90$ & $34.68$ \\
		DCQ-GNN ($T=4$) & $144,673$ & $7.75$ & $34.23$ \\
		DCQ-GNN ($T=3$) & $123,994$ & $5.88$ & $35.47$ \\
		DCQ-GNN ($T=2$) & $103,315$ & $4.83$ & $35.07$ \\
		DCQ-GNN ($T=1$) & $82,636$ & $\mathbf{3.27}$ & $\mathbf{35.63}$ \\
		\bottomrule
	\end{tabular}
	\label{tab:complex_compa}
\end{table}
To validate the design choice of employing a decoupled quadratic filter bank over high-order single-filter methods, we compare the computational efficiency\footnote{Runtime evaluations are conducted on a server equipped with an Intel Xeon Platinum 8352V CPU (12 vCPUs @ 2.10GHz) and an NVIDIA L20 GPU.} and performance of DCQ-GNN against state-of-the-art baselines: BernNet and JacobiConv, on the \textit{Actor} dataset. Table~\ref{tab:complex_compa} summarizes the model size (parameters), runtime, and classification accuracy.

\textbf{Efficiency vs. Accuracy:} As shown in Table~\ref{tab:complex_compa}, while \texttt{JacobiConv} achieves high accuracy, it incurs a significant computational overhead, requiring over $7$ million parameters and $12.30$ ms per epoch due to its complex polynomial basis. In contrast, DCQ-GNN ($T=3$) reduces the parameter count by an order of magnitude ($>10\times$) and decreases latency by more than $50\%$ ($5.88$ ms vs. $12.30$ ms), all while maintaining highly competitive accuracy ($35.47\%$ vs. $35.80\%$).

\textbf{Scalability:} Notably, DCQ-GNN remains faster than \texttt{JacobiConv} even when the filter bank depth is increased to $T=5$ ($9.90$ ms). Furthermore, compared to \texttt{BernNet}, DCQ-GNN ($T=3$) is approximately $38\%$ faster ($5.88$ ms vs. $8.15$ ms). This demonstrates that our decoupled quadratic formulation effectively aggregates long-range signals (theoretically spanning $2T$-hops) without the heavy computational burden associated with high-degree polynomial expansions.

\textbf{Robustness to Over-smoothing:} Furthermore, the results demonstrate that DCQ-GNN effectively mitigates the over-smoothing problem. In many spectral GNNs, increasing the propagation depth typically leads to a sharp decline in performance as node representations become indistinguishable. However, as shown in Table \ref{tab:complex_compa}, DCQ-GNN maintains stable accuracy as the filter bank depth increases from $T=1$ to $T=5$. Since a depth of $T$ theoretically corresponds to a receptive field of $2T$, our model maintains stable accuracy up to  $10$-hop receptive fields, suggesting improved resistance to over-smoothing.
A discussion of receptive-field expansion and over-squashing is provided in Appendix~\ref{app:oversquashing}.

\section{Conclusion and Future Work}
This work establishes quadratic spectral filtering as an intermediate design between the limited selectivity of linear filters and the optimization complexity of high-order polynomials. We propose DCQ-GNN, which leverages a compact bank of adaptive convex--concave quadratic filters to achieve sharper spectral shaping and improved noise rejection while maintaining a strictly two-hop computational footprint. Our theoretical analysis, grounded in Dirichlet energy and von Neumann entropy, shows that exploiting curvature polarity yields more concentrated spectral responses than first-order baselines. Furthermore, we show that the quadratic formulation inherently facilitates structural interference cancellation, providing a second-order correction mechanism against adversarial edge injections.\\

Empirical results show that DCQ-GNN remains competitive with representative high-order polynomial spectral filters while improving robustness under structural perturbations and reducing model complexity. The fixed second-order parameterization may be less suitable for tasks requiring highly localized, multi-modal, or non-polynomial spectral responses. For irregular spectra such as Squirrel, piecewise quadratic filtering could partition $[0,2]$ into sub-intervals and learn separate second-order responses, retaining curvature control while improving spectral localization. Future work includes robustness bounds under non-stochastic structural noise and scalable curvature-aware filter banks.

\bibliographystyle{plain}
\bibliography{references}
\newpage

\appendix
\section*{Appendices}
\section{Notation}\label{app:notation} 
Table~\ref{app: tabNotation} summarizes the key notations and symbols used throughout this paper.
\begin{table*}[!h]
	\renewcommand{\arraystretch}{1.2}
	\centering
	\caption{Key notations used throughout the paper.}
	\begin{tabular}{l p{12cm}}
		\toprule
		\textbf{Symbol} & \textbf{Description} \\
		\midrule
		$G = (\mathcal{V}, \mathcal{E})$ & An undirected graph with node set $\mathcal{V}$ and edge set $\mathcal{E}$. \\
		$N$ & Number of nodes in the graph, i.e., $|\mathcal{V}|$. \\
		$M$  & Number of edges, i.e., $|\mathcal{E}|$.\\
		$F$  & Dimension of node feature vectors. \\
		$A \in \mathbb{R}^{N \times N}$ & The adjacency matrix of the graph. \\
		$D \in \mathbb{R}^{N \times N}$ & The diagonal degree matrix, where $D_{ii} = \sum_j A_{ij}$. \\
		$\hat{A}$ & Symmetrically normalized adjacency matrix, $\hat{A} = D^{-1/2}AD^{-1/2}$. \\
		$I$ & Identity matrix of size $N \times N$. \\
		$\mathbf{L}$ & The normalized graph Laplacian, $\mathbf{L} = I- \hat{A}$. \\
		$T$   & Depth of the neural network.\\
		$U, \Lambda$ & Eigenvector matrix and diagonal eigenvalue matrix of $\mathbf{L}$, respectively.\\
		$\lambda_i$ & The $i$-th eigenvalue of the Laplacian $\mathbf{L}$, ordered $0 \le \lambda_1 \le \dots \le \lambda_N \le 2$. \\
		$X \in \mathbb{R}^{N \times F}$ & The input node feature matrix. \\
		$g(\lambda)$ & A spectral filter function defined on the spectrum of $\mathbf{L}$. \\
		$\mathcal{B}$ & The proposed convex--concave quadratic filter bank $\{g_{\text{low}}, g_{\text{mid}}, g_{\text{high}}, g_{\text{all}}\}$. \\
		$\alpha$ & Learnable parameter controlling the transition frequency (cutoff) of the quadratic filter. \\
		$\beta$ & Learnable parameter controlling the magnitude scale of the quadratic filter. \\
		${\tilde \lambda}$ & The effective cutoff proxy, defined as ${\tilde \lambda} = 2 - \alpha$. \\
		$\mathbf{Z}_k$ & Intermediate feature representation at propagation step $k$, e.g., $\mathbf{Z}_2 = \mathbf{L}^2X$. \\
		$H_k$ & The output representation of the $k$-th filter channel (low, mid, high, or all). \\
		$H_{\text{concat}}$ & Concatenated representations from all filter channels. \\
		$W_{\text{gate}}, b_{\text{gate}}$ & Learnable weight matrix and bias vector for the gating mechanism. \\
		$S$ & Node-adaptive gating scores (attention weights) for channel fusion. \\
		$\Theta_k$ & Learnable linear transformation matrix for channel $k$. \\
		$\sigma(\cdot)$ & Activation function (e.g., ReLU). \\
		$S_{\mathrm{VN}}(P)$ & Von Neumann entropy of the normalized power response matrix $P$. \\
		\bottomrule
	\end{tabular}
	\label{app: tabNotation}
\end{table*}

\section{More Discussion}\label{app:discussion} 
In this section, we first present a technical comparison between DCQ-GNN and representative adaptive polynomial spectral methods, including ChebyNet, BernNet, and JacobiConv. We then analyze the sensitivity of the proposed model and discuss its computational complexity.
\subsection{Comparison with Polynomial Spectral GNNs}
Polynomial spectral GNNs such as ChebyNet, BernNet, and JacobiConv learn spectral filters through basis expansions. Specifically, these models parameterize the filter as a linear combination of predefined polynomial bases, where the learned coefficients implicitly determine the spectral response.

In contrast, the proposed DCQ-GNN does not rely on polynomial basis expansions to approximate arbitrary filters. Instead, it explicitly constructs a filter bank characterized by curvature polarity. Each filter is designed to satisfy structural second-order derivative constraints (convex, concave, or mid-pass), leading to interpretable and stable spectral responses. This design shifts the focus from polynomial degree to curvature structure, enabling expressive filtering with a small number of parameters.

\subsection{Sensitivity Analysis}
We analyze the stability by considering sensitivity to perturbations in both the spatial and spectral domains.

{\bf Spatial Sensitivity.}
Differentiating the quadratic propagation operator with respect to a perturbed edge weight $w_{uv}$ yields
\begin{equation}
	\nabla_{w_{uv}}h_u^{\mathrm{quad}} = \mathbf{x}_v - \gamma \cdot h_v^{\mathrm{local}},
\end{equation}
where $\gamma$ is a coefficient derived from the second-order filter weight,   $h_v^{\mathrm{local}}$ denotes the local aggregated representation of node $v$ obtained after one propagation step. If the attacker node $v$ is a ``bridge'' to a distinct community (heterophilic noise), $h_v^{\mathrm{local}}$ differs from $\mathbf{x}_v$, and the filter maintains high sensitivity. However, if the perturbation is locally homophilic, the gradient is suppressed.

{\bf Spectral Sensitivity.}
Structural perturbations induce shifts in the Laplacian eigenvalues. We quantify the filter's sensitivity to these shifts via the derivative magnitude $\mathcal{S}_g(\lambda) = |g'(\lambda)|$.
For a linear filter $g_\ell(\lambda) = 1 - \frac{1}{2}\lambda$, the sensitivity is constant: $\mathcal{S}_\ell(\lambda) = 0.5$.
For our convex high-pass filter $g_q(\lambda) = \frac{\beta}{{\tilde \lambda}^2}\lambda^2$, the sensitivity $\mathcal{S}_q(\lambda) = \frac{2\beta}{{\tilde \lambda}^2}\lambda$ vanishes as $\lambda \to 0$. This implies that the quadratic filter exhibits reduced sensitivity near low-frequency components of the spectrum. Conversely, for low-pass components, the quadratic form provides a steeper ``roll-off'' near the cutoff, improving rejection of high-frequency topology noise.

\subsection{Computational Complexity}
The computational cost of DCQ-GNN is primarily dominated by the Sparse-Dense Matrix Multiplications (SpMM) required for spectral propagation. For a graph $G = (\mathcal{V}, \mathcal{E})$ with $N$ nodes, $M$ edges, and input features $X \in \mathbb{R}^{N \times F}$, the complexity for a single layer with a filter bank of size $B$ is analyzed as follows: (1) Spectral Propagation. Each quadratic spectral filter $g(\mathbf{L}) = a_0 I + a_1 \mathbf{L} + a_2 \mathbf{L}^2$ is implemented via recursive neighborhood aggregation operations. The second-order term $\mathbf{L}^2 X$ is computed as $\mathbf{L}(\mathbf{L}X)$, requiring two SpMM operations. Thus, for $B$ filters, the propagation cost is $\mathcal{O}(BMF)$. (2) Adaptive Gating and Fusion. The node-wise attention mechanism involves a linear projection to compute scores and a weighted summation of $B$ filter outputs. This contributes a complexity of $\mathcal{O}(BNF)$. The total complexity per layer is $\mathcal{O}(B(MF + NF))$, which remains linear with respect to the number of edges $M$. This ensures that DCQ-GNN maintains the efficiency of low-order spatial GNNs while offering enhanced spectral selectivity.

\subsection{Receptive Field Expansion and Over-squashing}
\label{app:oversquashing}
\textbf{Over-squashing.}
Over-squashing~\cite{alon2021bottleneck} refers to the compression of long-range signals through narrow graph bottlenecks as propagation depth increases. In DCQ-GNN, each quadratic layer uses second-order Laplacian filtering, where $g(\mathbf{L})=a_0I+a_1\mathbf{L}+a_2\mathbf{L}^2$ can be equivalently expressed as a weighted combination of $I$, $\hat{A}$, and $\hat{A}^2$ since $\mathbf{L}=I-\hat{A}$. Thus, a single layer aggregates identity, one-hop, and two-hop components without stacking two sequential first-order layers. For a graph with average degree $d$, the two-hop term can access up to $\mathcal{O}(d^2)$ local routes, increasing the number of parallel aggregation paths before depth is increased. This second-order expansion provides a wider local receptive field than a purely first-order propagation step while keeping the per-layer propagation order fixed. Consequently, increasing the filter bank depth $T$ expands the receptive field to at most $2T$ hops without relying on a depth-$2T$ stack of first-order transformations. This mechanism is consistent with the stable accuracy observed from $T=1$ to $T=5$ in Table~\ref{tab:complex_compa}.

\section{Extended Theoretical Results}\label{app:theory}

\subsection{Proof of Main Results}\label{app:theory-proof}
This section first presents the proofs of Theorems~\ref{thm:main-diri-energy-rate} and~\ref{thm:main-entropyMajor}, together with Proposition~\ref{prop:main-sturcInfer}, and then provides additional results on quadratic spectral filter decomposition.
\begin{proof}[\bf Proof of Theorem \ref{thm:main-diri-energy-rate}]
	Let $f_\ell(a) = \sup_{\lambda \in [\lambda^*, 2]} \lambda(1 - a\lambda)^2$, and  $a^*$ be the optimal coefficient that minimizes this maximum error, defining the optimal linear filter
	$$g_\ell^*(\lambda) = 1 - a^*\lambda.$$
	Because $g_\ell^*(\lambda)$ is monotonic and linear, the minimax criterion dictates that its maximum weighted energy, denoted as $M_L$, must be balanced at the boundaries of the transition band:
	$$M_L = \lambda^*(1 - a^*\lambda^*)^2 = 2(1 - 2a^*)^2.$$
	To balance these bounds, the filter must cross zero within the interval, implying $g_\ell^*(\lambda^*) > 0$ and $g_\ell^*(2) < 0$.
	We now construct a quadratic filter $g_q(\lambda) \in \mathcal{G}_Q$ that strictly reduces this maximum energy. Define a two-parameter perturbation:
	$$g_q(\lambda) = 1 - (a^* + \delta)\lambda + b\lambda^2, $$
	where $\delta > 0$ and $b > 0$ are arbitrarily small perturbation constants. We analyze the filter's energy at the critical boundaries: 
	
	(1)  At the upper boundary $\lambda = 2$: We require the absolute magnitude of the filter to be strictly reduced: 
	$$|g_q(2)| < |g_\ell^*(2)|.$$
	Since $g_\ell^*(2) = 1 - 2a^* < 0$, we substitute our perturbed filter and reverse the signs:
	$$-(1 - 2a^* - 2\delta + 4b) < -(1 - 2a^*)$$
	This simplifies to the condition $4b - 2\delta > 0$, or equivalently:
	$$b > \frac{\delta}{2}.$$
	
	(2) At the lower boundary $\lambda = \lambda^*$: Similarly, we require $|g_q(\lambda^*)| < |g_\ell^*(\lambda^*)|$. Since $g_\ell^*(\lambda^*) > 0$:
	$$1 - (a^* + \delta)\lambda^* + a(\lambda^*)^2 < 1 - a^*\lambda^*.$$
	This simplifies to:
	$$b(\lambda^*)^2 - \delta\lambda^* < 0 \implies b < \frac{\delta}{\lambda^*}.$$
	Because $\lambda^* \in (0, 2)$, we  have $\frac{1}{2} < \frac{1}{\lambda^*}$. Therefore, there exists a non-empty feasible region for our perturbation parameters. We can cleanly choose a sufficiently small $\delta > 0$ and set $b$ such that:
	$$\frac{\delta}{2} < b < \frac{\delta}{\lambda^*}.$$
	By satisfying this condition, we guarantee that the weighted Dirichlet energy is strictly reduced at both boundaries:
	$$\lambda^*|g_q(\lambda^*)|^2 < M_L \quad \text{and} \quad 2|g_q(2)|^2 < M_L.$$
	Because the polynomials are continuous and evaluated on a compact interval, a sufficiently small choice of $\delta$ ensures that no interior point $\lambda \in (\lambda^*, 2)$ exceeds the new strictly lowered bounds. 
	
	Furthermore, because the perturbation is arbitrarily small and $g_q(0) = 1$, the filter naturally respects the stability constraint $\sup_{\lambda \in [0, 2]} |g_q(\lambda)|\le 1$.
	Consequently, we achieve:
	$$\sup_{\lambda\in[\lambda^*,2]} \lambda|g_q(\lambda)|^2 < M_L.$$
	This confirms the existence of a strictly superior quadratic filter, completing the proof. $\square$
	
\end{proof}

\begin{proof} [\bf Proof of Theorem \ref{thm:main-entropyMajor}]
	Let the eigenvalues of the normalized Laplacian be ordered $0 = \lambda_1 \leq \lambda_2 \leq \dots \leq \lambda_N \leq 2$. The normalized power spectral components are:
	$$p_i = \frac{(\lambda_{\max}-\lambda_i)^2}{\sum_{k=1}^N (\lambda_{\max}-\lambda_k)^2}, \quad q_i = \frac{(\lambda_{\max}-\lambda_i)^4}{\sum_{k=1}^N (\lambda_{\max}-\lambda_k)^4}.$$
	Define the ratio $r_i = \frac{q_i}{p_i} = C \cdot (\lambda_{\max}-\lambda_i)^2$, where $C$ is the ratio of the normalization constants. Since $\lambda_{\max}-\lambda_i$ is non-increasing in $i$, the sequence $\{r_i\}_{i=1}^N$ is non-increasing.
	
	Notice that $\sum p_i = \sum q_i = 1$, the difference $d_i = q_i - p_i$ changes sign at least once. Because $r_i$ is non-increasing, there exists an index $k^*$ such that:
	\begin{itemize}
		\item $q_i \geq p_i$ for $i \leq k^*$
		\item $q_i < p_i$ for $i > k^*$.
	\end{itemize}
	This single-crossing property implies that the mass of the distribution $\mathbf{q}$ is more heavily weighted toward the smaller indices (lower frequencies) compared to $\mathbf{p}$.
	
	Let $P_k = \sum_{i=1}^k p_i$ and $Q_k = \sum_{i=1}^k q_i$ be the cumulative sums. From the single-crossing property:
	
	For $k \leq k^*$, $Q_k - P_k = \sum_{i=1}^k (q_i - p_i) \geq 0$ because all terms are non-negative.
	
	For $k > k^*$, $Q_k - P_k = (Q_N - P_N) - \sum_{i=k+1}^N (q_i - p_i)$. 
	
	Since $Q_N = P_N = 1$ and $(q_i - p_i) < 0$ for $i > k^*$, the sum $\sum_{i=k+1}^N (q_i - p_i)$ is negative. Thus, $Q_k - P_k \geq 0$ for all $k$.
	By definition, this satisfies the condition for majorization: $\mathbf{q} \succ \mathbf{p} $.
	
	Since ${\bf L}$ (thus $P$) is diagonal, then the von Neumann entropy $S_{\mathrm{VN}}(P) = -\sum \eta_i \ln \eta_i$ is a strictly Schur-concave function. By the property of Schur-concavity:
	$$\mathbf{q}\succ \mathbf{p}\implies S_{\mathrm{VN}}(\mathbf{q}) \leq S_{\mathrm{VN}}(\mathbf{p}).$$
	The inequality is strict unless $\mathcal {P} $ is a permutation of $\mathcal {Q} $, which, given the monotonicity of the filters, occurs only if the relevant eigenvalues are degenerate. $\square$
\end{proof}
\begin{proof}[\bf Proof of Proposition \ref{prop:main-sturcInfer}]
	Consider the simplified concave quadratic filter $g(\mathbf{L}) = 2\mathbf{L} - \mathbf{L}^2$. In terms of the normalized adjacency $\hat{A} = I - \mathbf{L}$, we have:
	$$g(\mathbf{L}) = 2(I- \hat{A}) - (I - \hat{A})^2 = I - \hat{A}^2$$
	To formalize the structural interference cancellation, consider the output representation for a target node $u$, given by $h_u = x_u - [\hat{A}^2 X]_u$. Suppose an adversarial edge $(u, v)$ is injected into the graph to corrupt node $u$.
	In a standard first-order GCN (where $h_{GCN} = \hat{A}X$), the direct feature pollution from the adversarial node $v$ onto $u$ is given by:
	$$\Delta_{GCN} = \hat{A}_{uv}x_v = \frac{1}{\sqrt{d_u d_v}}x_v.$$
	In DCQ-GNN, the application of $\hat{A}^2$ computes a two-hop random walk. The structural interference reaching $u$ along the adversarial path $v$ is: 
	$$[\hat{A}^2 X]_{u \leftarrow v} = \sum_{t \in \mathcal{N}(v)} \hat{A}_{uv}\hat{A}_{vt}x_t = \frac{1}{\sqrt{d_u d_v}} \sum_{t \in \mathcal{N}(v)} \frac{1}{\sqrt{d_v d_t}} x_t.$$
	Assume the adversarial node $v$ is embedded in a locally homophilic neighborhood such that its neighbors' features are $\epsilon$-consistent with $x_v$. Formally, let $x_t = x_v + \xi_t$, where $\|\xi_t\| \le \epsilon$ for all $t \in \mathcal{N}(v)$. Assuming bounded degree variation within the local neighborhood (i.e., $d_t \in [c_1 d_v, c_2 d_v]$ for constants $c_1,c_2>0$), the adversarial contribution becomes: 
	$$[\hat{A}^2 X]_{u \leftarrow v} \approx \frac{1}{\sqrt{d_u d_v}} \sum_{t \in \mathcal{N}(v)} \frac{1}{d_v} (x_v + \xi_t) = \frac{|\mathcal{N}(v)|}{d_v\sqrt{d_u d_v}}x_v + Err,$$
	where the error term $Err = \frac{1}{d_v\sqrt{d_u d_v}} \sum_{t \in \mathcal{N}(v)} \xi_t$.
	Since $|\mathcal{N}(v)| = d_v$, the primary feature injection term evaluates exactly to $\frac{1}{\sqrt{d_u d_v}}x_v$, which is identical to the first-order pollution $\Delta_{GCN}$. Because the filter applies $I - \hat{A}^2$, this adversarial signal is explicitly subtracted from the node's representation.
	Furthermore, by the triangle inequality, the residual error is bounded by:
	$$\|Err\| \le \frac{1}{d_v\sqrt{d_u d_v}} \sum_{t \in \mathcal{N}(v)} \|\xi_t\| \le \frac{\epsilon}{\sqrt{d_u d_v}}.$$
	Thus, as local homophily around the attacker increases ($\epsilon \to 0$), the quadratic filter effectively cancels the adversarial injection. $\square$
\end{proof}
\subsection{Stability Analysis}
This section establishes the structural robustness of DCQ-GNN by characterizing its response under Laplacian perturbations.
\begin{theorem}[Spectral Stability Under Laplacian Perturbation]\label{app:thm-strucprtur}
	Let  $\mathbf L$ be the normalized graph Laplacian of a graph $G$, and let $\tilde{\mathbf L} = \mathbf L + \Delta$ denote a perturbed Laplacian where $\|\Delta\|_2 \le \epsilon$.  Consider a quadratic spectral filter 
	$$g(\mathbf L) = a_0 I + a_1 \mathbf L + a_2 {\mathbf L}^2 .$$
	Then the perturbation in the filtered signal satisfies
	$$\|g(\tilde{\mathbf L})x - g(\mathbf L)x\|_2\le \epsilon\left(|a_1| + 4|a_2| + |a_2|\epsilon\right)\|x\|_2 .$$
	Consequently, the quadratic spectral filtering operator exhibits Lipschitz stability with respect to structural perturbations of the graph topology.
\end{theorem}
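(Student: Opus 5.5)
The plan is to expand $g(\tilde{\mathbf L})-g(\mathbf L)$ algebraically and bound each term with the operator norm. The constant term $a_0 I$ cancels. The linear term contributes exactly $a_1\Delta$. For the quadratic term, expand
\[
\tilde{\mathbf L}^2-\mathbf L^2=(\mathbf L+\Delta)^2-\mathbf L^2=\mathbf L\Delta+\Delta\mathbf L+\Delta^2 .
\]
We must not assume that $\mathbf L$ and $\Delta$ commute, so both cross terms are kept separately. This gives the identity
\[
g(\tilde{\mathbf L})-g(\mathbf L)=a_1\Delta+a_2\left(\mathbf L\Delta+\Delta\mathbf L+\Delta^2\right).
\]

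Next, apply this identity to $x$ and use the triangle inequality together with submultiplicativity of the spectral norm:
\[
\|(g(\tilde{\mathbf L})-g(\mathbf L))x\|_2\le\left(|a_1|\|\Delta\|_2+|a_2|\left(2\|\mathbf L\|_2\|\Delta\|_2+\|\Delta\|_2^2\right)\right)\|x\|_2 .
\]
The one ingredient not yet available is a bound on $\|\mathbf L\|_2$. Since $\mathbf L$ is symmetric with spectrum in $[0,2]$, as stated in the Preliminaries, we have $\|\mathbf L\|_2=\lambda_N\le 2$. Hence $2\|\mathbf L\|_2\le 4$. Substituting $\|\Delta\|_2\le\epsilon$ gives $\epsilon(|a_1|+4|a_2|+|a_2|\epsilon)\|x\|_2$, which is the claimed bound.

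The only place that needs care is the cross term. The coefficient $4$ comes from two copies of $\|\mathbf L\|_2\le 2$, and it depends on bounding $\mathbf L\Delta$ and $\Delta\mathbf L$ separately rather than as $2\mathbf L\Delta$. No assumption about $\tilde{\mathbf L}$ being a valid Laplacian is needed; only the norm bound on $\Delta$ is used.

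For the Lipschitz conclusion, note that for $\epsilon\le\epsilon_0$ the constant is at most $|a_1|+4|a_2|+|a_2|\epsilon_0$, so the map from $\mathbf L$ to $g(\mathbf L)x$ is locally Lipschitz in $\Delta$. I expect no real obstacle here; the argument is a direct expansion followed by a norm estimate.
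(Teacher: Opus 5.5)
Your proposal is correct and follows the paper's proof essentially step for step: you expand $(\mathbf L+\Delta)^2$ keeping $\mathbf L\Delta$ and $\Delta\mathbf L$ separate, cancel $a_0 I$, apply the triangle inequality and submultiplicativity, and use $\|\mathbf L\|_2\le 2$ (the paper first bounds $\|g(\tilde{\mathbf L})-g(\mathbf L)\|_2$ and then multiplies by $\|x\|_2$, which amounts to the same thing). One small aside: the factor $4$ does not depend on treating the cross terms separately, since $\|2\mathbf L\Delta\|_2\le 4\epsilon$ as well when $\mathbf L$ and $\Delta$ commute, so keeping them separate matters only because $\mathbf L\Delta+\Delta\mathbf L=2\mathbf L\Delta$ is unavailable in general.
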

\begin{proof}
	Let
	$$g(\mathbf L) = a_0 I + a_1 \mathbf L + a_2 {\mathbf L}^2 .$$
	Evaluating the filter on the perturbed operator yields:
	$$
	g(\tilde{\mathbf L}) = a_0 I + a_1 (\mathbf L+\Delta) + a_2 (\mathbf L+\Delta)^2.
	$$
	Expanding the second-order term, we have
	$$(\mathbf L+\Delta)^2 ={\mathbf L}^2 + \mathbf L\Delta + \Delta \mathbf L + \Delta^2 .$$
	The operator difference is consequently expressed as:
	$$g(\tilde{\mathbf L}) - g(\mathbf L)= a_1 \Delta +a_2 (\mathbf L\Delta + \Delta {\mathbf L})+a_2 \Delta^2 .$$
	Taking the spectral norm and using the triangle inequality, we have
	$$\|g(\tilde{\mathbf L}) - g(\mathbf L)\|_2\le |a_1| \|\Delta\|_2+
	|a_2|(\|\mathbf L\Delta\|_2 + \|\Delta \mathbf L\|_2)+|a_2| \|\Delta^2\|_2 .$$
	By the submultiplicativity of the operator norm and noting that $\|\mathbf{L}\|_2 \le 2$ for normalized Laplacians, the individual terms are bounded as $\|\mathbf{L}\Delta\|_2 \le 2\epsilon$ and $\|\Delta^2\|_2 \le \epsilon^2$. Substituting these into the inequality yields:
	$$\|g(\tilde{\mathbf L}) - g(\mathbf L)\|_2\le \epsilon \left(|a_1|+2|a_2|\|\mathbf L\|_2+|a_2|\epsilon\right).$$
	Finally, for any signal $x \in \mathbb{R}^n$, the perturbation error satisfies:
	\begin{align*}
		\|g(\tilde{\mathbf L})x - g(\mathbf L)x\|_2
		&\le\|g(\tilde{\mathbf L}) - g(\mathbf L)\|_2 \|x\|_2 \\
		&\le\epsilon\left(|a_1|+2|a_2|\|\mathbf L\|_2+|a_2|\epsilon\right)\|x\|_2\\
		&\le \epsilon\left(|a_1|+4|a_2|+|a_2|\epsilon\right)\|x\|_2.
	\end{align*}
	This completes the proof, confirming that the quadratic filter exhibits bounded output variation under bounded topological perturbations.
	$\square$.
\end{proof}
\subsection{Spectral Selectivity of  Quadratic Filters}
\begin{theorem}[Enhanced Spectral Resolution of Quadratic Formulations] 
	Let $\mathbf{L}$ be the normalized graph Laplacian with eigenvalues $\lambda \in [0,2]$. Define a linear filter response as $g_{\ell}(\lambda) = a_0+ a_1\lambda$ and a quadratic counterpart as $g_{q}(\lambda) = a_0 + a_1\lambda + a_2\lambda^2$. For any non-zero $a_2$, there exists a non-empty spectral interval $[\lambda_1, \lambda_2] \subseteq [0,2]$ such that:
	$$|g_{q}(\lambda_1)-g_{q}(\lambda_2)| > |g_{\ell}(\lambda_1)-g_{\ell}(\lambda_2)|.$$
	Thus, quadratic filters can achieve strictly higher spectral selectivity under compatible coefficient polarity conditions, enabling sharper discrimination of high-frequency components relative to the linear regime. 
\end{theorem}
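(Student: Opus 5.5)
The inequality is a statement about difference quotients, so I will compute the difference of $g_q$ and $g_\ell$ directly. For any $\lambda_1<\lambda_2$ in $[0,2]$,
\begin{align*}
g_q(\lambda_1)-g_q(\lambda_2)&=(\lambda_1-\lambda_2)\bigl(a_1+a_2(\lambda_1+\lambda_2)\bigr),\\
g_\ell(\lambda_1)-g_\ell(\lambda_2)&=(\lambda_1-\lambda_2)\,a_1 .
\end{align*}
Dividing by $|\lambda_1-\lambda_2|>0$, the claim is equivalent to finding $\lambda_1<\lambda_2$ in $[0,2]$ with
\begin{equation*}
|a_1+a_2 s|>|a_1|, \qquad s:=\lambda_1+\lambda_2\in(0,4).
\end{equation*}
This reduces the problem to a one-variable question about the affine function $s\mapsto a_1+a_2 s$ on $(0,4)$.

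I will split on the relative sign of $a_1$ and $a_2$. If $a_1a_2\ge 0$ (the ``compatible polarity'' regime mentioned after the statement), then for every $s>0$
\[
|a_1+a_2s|=|a_1|+|a_2|s>|a_1|,
\]
since $a_2\neq 0$. In this case any interval works, for instance $[\lambda_1,\lambda_2]=[0,2]$.

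If $a_1a_2<0$, the affine function $a_1+a_2s$ has its zero at $s_0=-a_1/a_2>0$. Then $|a_1+a_2s|>|a_1|$ holds exactly when $s>2s_0=-2a_1/a_2$. So I need some $s<4$ with $s>2s_0$, which is possible precisely when $|a_1|<2|a_2|$. In that case I pick $\lambda_1<\lambda_2$ close to $2$ so that $\lambda_1+\lambda_2>2s_0$. Numerically, $a_1=-1$, $a_2=1$ gives $2s_0=2<4$, so for example $[\lambda_1,\lambda_2]=[1.5,2]$ works.

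The main obstacle is the remaining case $a_1a_2<0$ with $|a_1|\ge 2|a_2|$. There $|a_1+a_2s|<|a_1|$ for every $s\in(0,4)$, so the statement as literally written fails. A concrete counterexample is $a_1=-1$, $a_2=0.1$, where the quadratic's slope is everywhere smaller in magnitude than the linear one. My plan is to prove the theorem under the compatibility hypothesis that the closing sentence already alludes to, namely $a_1a_2\ge0$ or $|a_1|<2|a_2|$, and to state this condition explicitly. I will note that this condition is also necessary, by the computation above.
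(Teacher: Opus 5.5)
Your analysis is correct, and its core step is the paper's own computation. Both arguments factor $g_q(\lambda_1)-g_q(\lambda_2)=(\lambda_1-\lambda_2)\bigl(a_1+a_2(\lambda_1+\lambda_2)\bigr)$ and compare it with $a_1(\lambda_1-\lambda_2)$.

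Where you differ is scope. The paper's proof only considers pairs with $\mathrm{sgn}(a_1)=\mathrm{sgn}\bigl(a_2(\lambda_1+\lambda_2)\bigr)$, which is your case $a_1a_2>0$. It technically misses $a_1=0$, which your condition $a_1a_2\ge 0$ covers. It never treats opposite signs, so it does not establish the ``for any non-zero $a_2$'' claim. Its closing remark about the tail $\lambda\approx 2$ only gestures at your second case, and the statement's final sentence about ``compatible coefficient polarity conditions'' is its only acknowledgement of the restriction.

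Your reduction to $|a_1+a_2 s|>|a_1|$, with $s=\lambda_1+\lambda_2$, gives a sharp answer. A suitable interval exists if and only if $a_1a_2\ge 0$ or $|a_1|<2|a_2|$. Your example $a_1=-1$, $a_2=0.1$ gives $|{-1}+0.1s|\in(0.6,1)$ for all $s\in(0,4)$. So the theorem as literally stated is false, and stating the compatibility hypothesis explicitly, as you plan, is the right fix.

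When you write it up, state that $s$ sweeps all of $(0,4)$: take $\lambda_{1,2}=s/2\mp\varepsilon$ with $\varepsilon$ small. Without that, the ``only if'' direction of your characterization is not justified.
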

This result shows that the second-order term can enlarge spectral separation on suitable intervals without necessarily amplifying perturbation-sensitive components.
\begin{proof} 
	Consider the spectral gap between two frequencies $\lambda_1, \lambda_2$. For the linear filter, the gap is $\Delta g_{\ell} = a_1(\lambda_1-\lambda_2)$. For the quadratic case, the gap expands to:
	$$\Delta g_{q} = a_1(\lambda_1-\lambda_2) + a_2(\lambda_1^2-\lambda_2^2) = (\lambda_1-\lambda_2)\left[a_1 + a_2(\lambda_1+\lambda_2)\right].$$
	For any pair of non-trivial frequencies such that $\text{sgn}(a_1) = \text{sgn}(a_2(\lambda_1+\lambda_2))$, the absolute difference satisfies:
	$$|g_{q}(\lambda_1)-g_{q}(\lambda_2)| = |(\lambda_1-\lambda_2)| \cdot |a_1 + a_2(\lambda_1+\lambda_2)| > |a_1(\lambda_1-\lambda_2)|.$$
	This amplification of the spectral distance confirms that the quadratic term $a_2\lambda^2$ increases the filter's sensitivity to frequency variations, particularly in the high-frequency tail ($\lambda \approx 2$) where $a_2(\lambda_1+\lambda_2)$ is maximized. This completes the proof. 
	$\square$ 
\end{proof}

\subsection{Curvature-Based Spectral Expressivity}\label{app: perfor-exp}
This section investigates the expressive capacity of convex--concave combinations relative to uniform-polarity banks. Based on the curvature polarity ($\mathcal{CP}$) introduced in Definition \ref{def:main-curPol}, we demonstrate that opposing polarities are a necessary condition for spectral flexibility within low-order constraints.
Consider two convex filters $g_1(\lambda)$ and $g_2(\lambda)$ with $\mathcal{CP}=1$, combined as $H(\lambda) = w_1 g_1(\lambda) + w_2 g_2(\lambda)$, where $w_1, w_2 \in [0,1]$. Since the second derivative of a conical combination of convex functions remains non-negative ($H''(\lambda) \geq 0$), the resulting filter is strictly constrained to the convex functional space. Consequently, its response is restricted to monotonic or U-shaped profiles, precluding the modeling of localized mid-frequency structures common in heterophilic graphs.
In contrast, the proposed convex--concave combination $H(\lambda) = w_1 g_{\text{low}}^{\text{cvx}}(\lambda) + w_2 g_{\text{high}}^{\text{ccv}}(\lambda)$ yields a second derivative:
$$H''(\lambda) = \frac{2 (w_1 - w_2)\beta}{{\tilde \lambda}^2}.$$
Analytic derivation reveals that the curvature polarity of $H(\lambda)$ is continuously steerable across the set $\{-1, 0, 1\}$ by modulating the gating weights $w_1, w_2$. Unlike uniform-polarity filters, this configuration facilitates a band-pass response when the first and second derivatives vanish at critical spectral points—a behavior mathematically unattainable via convex combinations alone. Consequently, the convex--concave pair serves as a flexible generator for second-order spectral responses, capturing the full spectrum of homophilic and heterophilic signals within a compact polynomial family.

\begin{figure}[ht]
	\centering
	\includegraphics[width=0.96\linewidth]{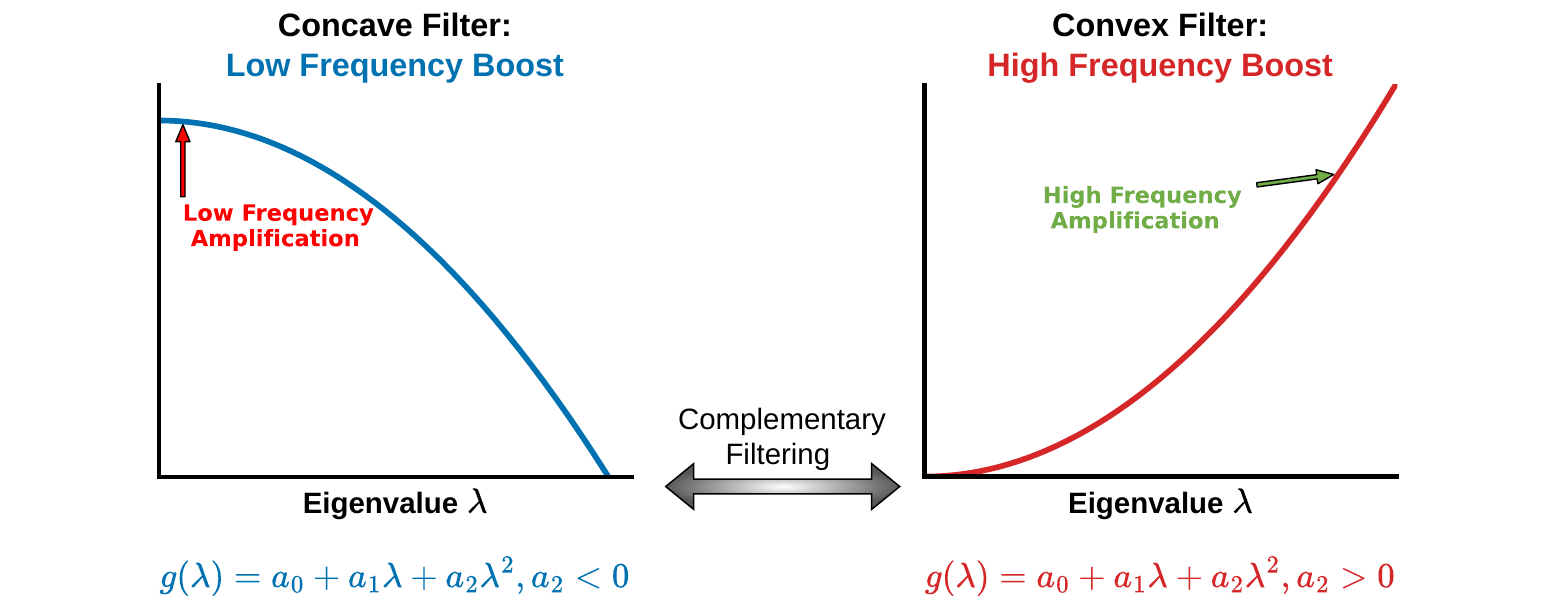}
	\caption{Spectral curvature analysis under different polarity coefficients. The sign of the quadratic coefficient ($a_2$) determines the curvature of the spectral response. Pairing convex and concave filters enables complementary low- and high-frequency modeling within a single layer.}
	\label{fig: app-curvature}
\end{figure}
The following result formally characterizes this spectral bias through the lens of second-order derivatives (see Fig.~\ref{fig: app-curvature}).
\begin{theorem}[Spectral Curvature Theorem]
	\label{thm:app-spec-curv}
	Let ${\mathbf L}$ be the normalized graph Laplacian with eigenvalues $\lambda \in [0,2]$. Consider a quadratic spectral filter $g(\lambda)=a_0+a_1\lambda+a_2\lambda^2$ with curvature $g''(\lambda)=2a_2$. The spectral bias is determined as follows:
	\begin{itemize}
		\item If $a_2>0$ (convexity), the filter exhibits high-frequency amplification relative to the low-frequency regime.
		\item If $a_2<0$ (concavity), the filter prioritizes low-frequency components by attenuating the high-frequency gain.
	\end{itemize}
	Consequently, a convex--concave pair constitutes a complementary spectral decomposition for the simultaneous capture of multi-scale graph signals. 
\end{theorem}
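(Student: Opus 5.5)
The claim is informal, so the first step is to fix a precise meaning for ``high-frequency amplification relative to the low-frequency regime.'' I would measure the curvature's contribution by comparing $g$ with its linear part $g_\ell(\lambda)=a_0+a_1\lambda$, since this is the linear filter that $g$ reduces to when $a_2=0$. Then
\[
g(\lambda)-g_\ell(\lambda)=a_2\lambda^2 .
\]
This difference vanishes at $\lambda=0$, is strictly monotone on $[0,2]$, and is largest in magnitude, $4|a_2|$, at $\lambda=2$. For $a_2>0$ the correction is nonnegative and increasing in $\lambda$, so high frequencies gain more than low ones. For $a_2<0$ the correction is nonpositive and decreasing, so the high-frequency gain is attenuated while $g$ agrees with its linear part at $\lambda=0$.

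Second, I would state the same fact intrinsically, as a relative gain over any pair of frequencies $\lambda_1<\lambda_2$ in $[0,2]$. Convexity of $g$ means $g(\lambda_2)-g(\lambda_1)\ge g'(\lambda_1)(\lambda_2-\lambda_1)$, with strict inequality when $a_2>0$; concavity gives the reverse inequality. Equivalently, the slope $g'(\lambda)=a_1+2a_2\lambda$ is strictly increasing when $a_2>0$ and strictly decreasing when $a_2<0$. So the marginal gain per unit of frequency grows toward the high end for convex filters and shrinks for concave ones. This also makes the link to the ``Enhanced Spectral Resolution'' theorem above: the spread is amplified precisely where $a_2(\lambda_1+\lambda_2)$ has the same sign as $a_1$.

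For the consequence I would invoke the concrete bank pair from \S\ref{sec:filter_bank}. Take $g_{\text{low}}^{\text{cvx}}$ and $g_{\text{high}}^{\text{ccv}}$, with $\mathcal{CP}$ equal to $+1$ and $-1$ respectively. By the partition-of-unity identity stated there,
\[
g_{\text{low}}^{\text{cvx}}+g_{\text{high}}^{\text{ccv}}=\beta,
\]
a constant, which gives the covering part of ``complementary decomposition.'' Steerability comes from the computation in Appendix~\ref{app: perfor-exp},
\[
H''=\frac{2(w_1-w_2)\beta}{\tilde\lambda^2},
\]
which shows the curvature of the mixture can take either sign as the weights vary. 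Together with the span statement of Theorem~\ref{thm: app-convex_concave_spanning}, this gives that any quadratic response, of either curvature sign, is reachable, i.e.\ the ``simultaneous capture'' claim.

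The main obstacle is the sign convention. Under the literal reading ``$g(2)>g(0)$,'' the statement is false in general: a convex low-pass filter such as $\beta(\lambda-\tilde\lambda)^2/\tilde\lambda^2$ is convex yet suppresses $\lambda$ near $\tilde\lambda$. I would therefore state explicitly that the bias is relative, meaning relative to the linear component or in terms of the monotone slope, and prove only that version. If a literal monotone claim is wanted, I would add the hypothesis that $g$ is monotone on $[0,2]$ and then read the conclusion off the sign of $g'(2)-g'(0)=4a_2$.
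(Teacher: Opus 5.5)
Your treatment of the two bullets is essentially the paper's argument. The paper writes $g(\lambda_2)-g(\lambda_1)=(\lambda_2-\lambda_1)\bigl[a_1+a_2(\lambda_1+\lambda_2)\bigr]$ and reads the bias off the sign of the quadratic contribution $a_2(\lambda_2^2-\lambda_1^2)$. That is your increment of $g-g_\ell=a_2\lambda^2$, or equivalently the monotonicity of $g'$.

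Your insistence on the relative reading is warranted, and it is sharper than the paper. The paper's proof asserts that the sign of $a_2$ governs the monotonicity of the gain. But with $\tilde\lambda=2$, the bank's own $g_{\text{low}}^{\text{cvx}}$ is convex and decreasing on $[0,2]$, and $g_{\text{high}}^{\text{ccv}}$ is concave and increasing. The paper's proof says nothing about the ``consequently'' sentence. Where you go beyond the paper, two steps fail.

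\textbf{The reachability inference.} For fixed $\tilde\lambda,\beta$,
\[
w_1 g_{\text{low}}^{\text{cvx}}(\lambda)+w_2 g_{\text{high}}^{\text{ccv}}(\lambda)=\frac{\beta}{\tilde\lambda^2}\bigl[(w_1-w_2)(\lambda-\tilde\lambda)^2+w_2\tilde\lambda^2\bigr].
\]
So the pair spans only the two-dimensional family of quadratics with a critical point at $\lambda=\tilde\lambda\in[1,2]$. Neither $1-\lambda$ nor $\lambda^2$ is a mixture of the pair.

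Theorem~\ref{thm: app-convex_concave_spanning} does not close this. Its decomposition uses $\delta\lambda^2$ plus an \emph{arbitrary} concave quadratic, not the two bank elements. Combining it with the $H''$ formula therefore does not show that every quadratic response is reachable.

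If you want a spanning statement about the architecture, use more of the bank. The filters $g_{\text{all}}=1$, $g_{\text{high}}^{\text{cvx}}\propto\lambda^2$ and $g_{\text{mid}}\propto\tilde\lambda\lambda-\lambda^2$ are linearly independent and span $\mathcal{Q}_2$. The softmax gates are nonnegative, so the maps $\Theta_k$ must supply signs and scales. Otherwise, restrict the consequence to what the pair actually gives: the constant response at $w_1=w_2$, and both curvature signs via $w_1-w_2$.

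\textbf{The fallback for a literal monotone claim.} Monotonicity together with the sign of $g'(2)-g'(0)=4a_2$ does not determine the direction of monotonicity. Both $(2-\lambda)^2$ and $\lambda^2$ are convex and monotone on $[0,2]$, but one decreases and the other increases. Your own example with $\tilde\lambda=2$ is of the first kind.

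The hypothesis that works is on the vertex. If $a_1a_2\ge 0$ and $a_2\neq 0$, then $a_1+a_2(\lambda_1+\lambda_2)$ has the sign of $a_2$ for all $0\le\lambda_1<\lambda_2\le 2$. Hence $g$ is strictly increasing when convex and strictly decreasing when concave, and this condition is also necessary for that conclusion.

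A smaller point: in your link to the Enhanced Spectral Resolution theorem, ``precisely'' should be ``in particular''. The inequality $|a_1+a_2s|>|a_1|$ also holds when $a_2s$ opposes $a_1$ but $|a_2s|>2|a_1|$.
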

Here, $g(\lambda)$ denotes the final spectral response defined in Eqs. \eqref{cvx-low}–\eqref{ccv-high}. The curvature polarity is positive for convex filters and negative for concave filters, which determines how aggressively the filter attenuates frequencies across the spectrum. In our framework, the coefficients $a_i$ ($i \in \{0,1,2\}$) are parameterized by the learnable cutoff $\alpha$ and scale 
$\beta$, allowing the spectral selectivity to remain adaptive. Theorem \ref{thm:app-spec-curv} further establishes that the curvature coefficient governs the spectral bias: convex filters tend to emphasize high-frequency components, whereas concave filters favor low-frequency components, enabling complementary frequency decomposition.
\begin{proof}
	Let $\lambda_1 < \lambda_2$ be two arbitrary eigenvalues. The relative amplification is given by $\Delta g = g(\lambda_2)-g(\lambda_1)$. Substituting the quadratic form:
	$$\Delta g=a_1(\lambda_2-\lambda_1)+a_2(\lambda_2^2-\lambda_1^2) = (\lambda_2-\lambda_1)\left[a_1+a_2(\lambda_1+\lambda_2)\right].$$
	Since $\lambda_1+\lambda_2 > 0$  for all non-trivial eigenvalues, the sign of the second-order coefficient $a_2$ governs the monotonicity of the gain relative to the frequency magnitude. Specifically, $a_2 > 0$ induces a positive curvature that scales the gain with $\lambda$, whereas $a_2 < 0$ imposes a concave penalty on higher frequencies. This proves that curvature polarity is the mechanical driver of spectral bias.
	$\square$
\end{proof}
In fact, any second-order spectral filter can be decomposed into the combination of convex and concave filters, which is presented in the following theorem.
\begin{theorem}[Convex--Concave Decomposition of Quadratic Filters]
	\label{thm: app-convex_concave_spanning}
	Let $\mathcal{Q}_2 = \{ q(\lambda) = a_0 + a_1\lambda + a_2\lambda^2: a_0, a_1, a_2 \in \mathbb{R} \}$ denote the space of second-order polynomials on the interval $[0,2]$ Define the polar sub-spaces:
	$$\mathcal{Q}_{cvx} = \{ q \in \mathcal{Q}_2 : a_2 > 0 \},\qquad \mathcal{Q}_{ccv} = \{ q \in \mathcal{Q}_2 : a_2 < 0 \}.
	$$
	Every quadratic filter $q \in \mathcal{Q}_2$ admits a decomposition $q = q_{cvx} + q_{ccv}$, where $q_{cvx} \in \mathcal{Q}_{cvx}$ and $q_{ccv} \in \mathcal{Q}_{ccv}$. This establishes that the space of quadratic filters $\mathcal{Q}_2$ is the sum of the convex and concave quadratic cones, $\mathcal{Q}_2 = \mathcal{Q}_{cvx} + \mathcal{Q}_{ccv}$.
\end{theorem}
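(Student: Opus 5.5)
The plan is a direct constructive argument. Given $q(\lambda)=a_0+a_1\lambda+a_2\lambda^2\in\mathcal{Q}_2$, I will choose a convex part and a concave part whose sum is exactly $q$.

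The key freedom is in the leading coefficient. I only need two reals $c>0$ and $d<0$ with $c+d=a_2$. A concrete choice is
\[
c = |a_2|+1>0,\qquad d = a_2-|a_2|-1<0,
\]
so that $c+d=a_2$ for every $a_2$, whether it is positive, negative or zero.

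With this choice I set
\[
q_{cvx}(\lambda)=a_0+a_1\lambda+c\lambda^2,\qquad q_{ccv}(\lambda)=d\lambda^2.
\]
The lower-order coefficients $a_0,a_1$ can be assigned entirely to the convex part, or split arbitrarily, since membership in $\mathcal{Q}_{cvx}$ and $\mathcal{Q}_{ccv}$ constrains only the sign of the quadratic coefficient. By the definition of curvature polarity in Definition~\ref{def:main-curPol}, $q_{cvx}''=2c>0$ and $q_{ccv}''=2d<0$, so the two parts have the required polarities. Summing coefficientwise gives $q_{cvx}+q_{ccv}=q$, which establishes $\mathcal{Q}_2\subseteq \mathcal{Q}_{cvx}+\mathcal{Q}_{ccv}$.

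The reverse inclusion $\mathcal{Q}_{cvx}+\mathcal{Q}_{ccv}\subseteq\mathcal{Q}_2$ is immediate, because $\mathcal{Q}_2$ is closed under addition. Together the two inclusions give equality.

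I would add one remark. $\mathcal{Q}_{cvx}$ and $\mathcal{Q}_{ccv}$ are open convex cones, closed under positive scaling and under addition, rather than linear subspaces. The decomposition is also far from unique: any $c>\max(a_2,0)$ works, with $d=a_2-c$.

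The only point needing care is the degenerate case $a_2=0$, where $q$ is linear and belongs to neither cone. The choice $c=1$, $d=-1$ handles it, and the uniform formula above already covers this case. I do not expect any real obstacle; the argument is a one-line coefficient split.

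If one wants to tie the result to the filter bank of Section~\ref{sec:filter_bank}, one can also note that $g_{\text{low}}^{\text{cvx}}$, $g_{\text{high}}^{\text{ccv}}$ and $g_{\text{all}}$ have coefficient vectors that are linearly independent. This needs only a $3\times 3$ determinant check, nonzero for $\beta,\tilde\lambda>0$. Real-weighted combinations of these bank filters therefore span $\mathcal{Q}_2$.
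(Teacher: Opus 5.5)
Your proof of the stated decomposition is correct and is essentially the paper's argument. Both proofs split only the leading coefficient. The paper takes $q_{cvx}(\lambda)=\delta\lambda^2$ with $\delta>|a_2|$ and puts $a_0+a_1\lambda$ into the concave part; you do the mirror image with $c=|a_2|+1$ and load $a_0+a_1\lambda$ into the convex part. Your reverse inclusion and your handling of $a_2=0$ are fine; the paper's choice $\delta>|a_2|$ covers $a_2=0$ as well.

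Your optional closing remark about the filter bank, however, is false and should be dropped or repaired. The coefficient vectors $(a_0,a_1,a_2)$ of $g_{\text{low}}^{\text{cvx}}$, $g_{\text{high}}^{\text{ccv}}$, $g_{\text{all}}$ are $\frac{\beta}{\tilde\lambda^2}(\tilde\lambda^2,-2\tilde\lambda,1)$, $\frac{\beta}{\tilde\lambda^2}(0,2\tilde\lambda,-1)$ and $(1,0,0)$. The first two sum to $(\beta,0,0)$, so $g_{\text{low}}^{\text{cvx}}+g_{\text{high}}^{\text{ccv}}\equiv\beta\,g_{\text{all}}$. This is the paper's own partition-of-unity identity, which in fact holds for every $\lambda$ and every $\beta,\tilde\lambda$, not only at $\tilde\lambda=1$, $\beta=1/4$. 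Hence your $3\times 3$ determinant is zero for all parameter values. The triple spans only the two-dimensional space $\mathrm{span}\{1,\lambda^2-2\tilde\lambda\lambda\}$, which does not even contain $\lambda$.

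If you want a bank-level spanning statement, replace $g_{\text{all}}$ by $g_{\text{mid}}$, whose coefficient vector is $\frac{4\beta}{\tilde\lambda^2}(0,\tilde\lambda,-1)$. The triple $g_{\text{low}}^{\text{cvx}}$, $g_{\text{high}}^{\text{ccv}}$, $g_{\text{mid}}$ then has coefficient determinant $-4\beta^3/\tilde\lambda^3\neq 0$. Alternatively, keep $g_{\text{all}}$ and use $g_{\text{high}}^{\text{cvx}}$ in place of $g_{\text{high}}^{\text{ccv}}$; that determinant is $-2\beta^2/\tilde\lambda^3\neq 0$.
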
 
This result establishes that the proposed convex--concave architecture is not merely an intuitive heuristic, but a two-branch convex--concave decomposition for second-order spectral filters.
\begin{proof} 
	For any $q(\lambda) = a_0 + a_1\lambda + a_2\lambda^2$, select any $\delta > |a_2|$ and define: $q_{cvx}(\lambda) = \delta \lambda^2$ and $q_{ccv}(\lambda) = a_0 + a_1\lambda  + (a_2-\delta)\lambda^2$. By construction, $\delta > 0$ ensures $q_{cvx} \in \mathcal{Q}_{cvx}$, while $a_2-\delta < 0$ guarantees $q_{ccv} \in \mathcal{Q}_{ccv}$. The identity $q = q_{cvx} + q_{ccv}$ holds trivially. $\square$
\end{proof} 
\paragraph{\bf Theoretical Implication.}
The presented analysis establishes three properties of DCQ-GNN:
\begin{itemize}
	\item  quadratic propagation increases spectral selectivity relative to linear filters;
	
	\item curvature polarity induces complementary frequency responses;
	
	\item the operator remains stable under bounded Laplacian perturbations.
\end{itemize}
Together, these results explain why DCQ-GNN can simultaneously capture low- and high-frequency graph signals while maintaining structural robustness.
\section{Algorithmic Details}\label{app:algorithms}

\begin{algorithm}[!htp]
	\renewcommand{\algorithmicrequire}{\textbf{Input:}}
	\renewcommand\algorithmicensure {\textbf{Output:} }
	\caption{Overall pipeline of the DCQ-GNN framework.}
	\label{app: alg:forward}
	\begin{algorithmic}[1]
		\REQUIRE Graph Laplacian $\mathbf{L} \in \mathbb{R}^{N \times N}$, 
		Node features $X \in \mathbb{R}^{N \times F}$
		\ENSURE Output embedding $H_{\text{out}} \in \mathbb{R}^{N \times F'}$
		
		\STATE \textbf{Step 1: Second-Order Recursive Propagation}
		\STATE $\mathbf{Z}_0 \leftarrow X$
		\STATE $\mathbf{Z}_1 \leftarrow \mathbf{L}\mathbf{Z}_0$ \COMMENT{One-hop propagation}
		\STATE $\mathbf{Z}_2 \leftarrow \mathbf{L}\mathbf{Z}_1$ \COMMENT{Two-hop propagation}
		
		\STATE \textbf{Step 2: Construction of Quadratic Filter Bank}
		\STATE $\tilde{\lambda} \leftarrow 2 - \alpha$ \COMMENT{Cutoff proxy}
		\STATE $c \leftarrow \beta / \tilde{\lambda}^2$ \COMMENT{Magnitude normalization}
		
		\IF{low-pass curvature is convex}
		\STATE $H_{\text{low}} \leftarrow 
		c \big(\mathbf{Z}_2 - 2\tilde{\lambda}\mathbf{Z}_1 
		+ \tilde{\lambda}^2 \mathbf{Z}_0 \big)$
		\ELSE
		\STATE $H_{\text{low}} \leftarrow 
		-c \big(\mathbf{Z}_2 - \tilde{\lambda}^2 \mathbf{Z}_0 \big)$
		\ENDIF
		
		\STATE $H_{\text{mid}} \leftarrow 
		-4c \big(\mathbf{Z}_2 - \tilde{\lambda}\mathbf{Z}_1 \big)$
		
		\IF{high-pass curvature is convex}
		\STATE $H_{\text{high}} \leftarrow c\,\mathbf{Z}_2$
		\ELSE
		\STATE $H_{\text{high}} \leftarrow 
		-c \big(\mathbf{Z}_2 - 2\tilde{\lambda}\mathbf{Z}_1 \big)$
		\ENDIF
		
		\STATE $H_{\text{all}} \leftarrow \mathbf{Z}_0$ \COMMENT{All-pass residual anchor}
		
		\STATE \textbf{Step 3: Channel-wise Linear Projection}
		\FOR{$k \in \{\text{low}, \text{mid}, \text{high}, \text{all}\}$}
		\STATE $\tilde{H}_k \leftarrow H_k \mathbf{\Theta}_k$
		\ENDFOR
		
		\STATE \textbf{Step 4: Node-Adaptive Gated Fusion}
		\STATE $H_{\text{concat}} \leftarrow 
		[H_{\text{low}} \| H_{\text{mid}} 
		\| H_{\text{high}} \| H_{\text{all}}]$
		\STATE $S \leftarrow 
		\mathrm{softmax}(H_{\text{concat}}W_{\text{gate}} 
		+ \mathbf{b}_{\text{gate}})$
		\STATE $H_{\text{out}} \leftarrow 
		\sum_k S_k \odot \tilde{H}_k$
		\STATE $H_{\text{out}} \leftarrow 
		\sigma(H_{\text{out}})$
		
		\RETURN $H_{\text{out}}$
	\end{algorithmic}
\end{algorithm}
The overall pipeline of DCQ-GNN is summarized in Algorithm~\ref{app: alg:forward}. The quadratic filter responses are computed via second-order recursive Laplacian propagation, thereby avoiding explicit polynomial expansion or eigendecomposition.  Each propagation step consists of a sparse–dense matrix multiplication. 

\section{Detailed Experiments}\label{app:experiments} 
This appendix provides supplementary experimental details, including dataset statistics (Table~\ref{tab:app-dataset_statistics}), baseline descriptions, and hyperparameter configurations. In addition, we extend the empirical study to further analyze robustness under adversarial perturbations, spectral adaptivity across structural regimes, and performance under class-balanced training protocols. Beyond the primary results reported in the main text, we address the following additional research questions:

\begin{itemize}
	\item \textbf{RQ2 Extension (Robustness Analysis):} Does the proposed structural interference correction mechanism remain effective under adversarial topology attacks on strongly homophilic graphs (e.g., \textit{PubMed})?
	
	\item \textbf{RQ5 (Spectral Selectivity and Adaptivity):} How do the learned spectral responses of the DCQ-GNN filter bank adapt across homophilic and heterophilic graph regimes compared to standard polynomial filters?
	
	\item \textbf{RQ6 (Evaluation under Dense Splits):} How does DCQ-GNN perform relative to recent spectral baselines when evaluated under a class-balanced (dense split) training protocol?
\end{itemize}
\subsection{Datasets}\label{app:experiments:data}
\begin{table}[ht] 
	\centering 
	\caption{Summary of dataset statistics and structural properties. Datasets are grouped into homophilic and heterophilic regimes. Reported metrics include the number of nodes $|\mathcal{V}|$, edges $|\mathcal{E}|$, feature dimension $F$, number of classes, assortativity, and three homophily measures ($h_{\text{node}}$, $h_{\text{edge}}$, $h_{\text{class}}$).}
	\resizebox{0.99\textwidth}{!}{
		\begin{tabular}{llrrrcrrrc}
			\toprule 
			\textbf{Type} &\textbf{Dataset} & $\mathbf{|\mathcal{V}|}$ & $\mathbf{|\mathcal{E}|}$ & $F$ & Classes & $h_{\text{node}}$ & $h_{\text{edge}}$ & $h_{\text{class}}$ & Assortativity \\ 
			\midrule 
			&\textbf{Photo} & 7,650 & 238,162 & 745 & 8 & 0.8365 & 0.8272 & 0.7722 & -0.0449 \\ 
			&\textbf{PubMed} & 19,717 & 88,648 & 500 & 3 & 0.7924 & 0.824 & 0.6641 & -0.0436 \\ 
			Homophilic&\textbf{Computers} & 13,752 & 491,722 & 767 & 10 & 0.7853 & 0.7772 & 0.7002 & -0.0565 \\ 
			&\textbf{CiteSeer} & 3,327 & 9,104 & 3,703 & 6 & 0.7062 & 0.7355 & 0.6267 & 0.0484 \\ 
			&\textbf{WikiCS} & 11,701 & 431,726 & 300 & 10 & 0.6588 & 0.6547 & 0.5681 & -0.0653 \\ 
			\hline 
			&\textbf{Wisconsin} & 251 & 515 & 1,703 & 5 & 0.1719 & 0.1961 & 0.0839 & -0.2723 \\ 
			&\textbf{Actor} & 7,600 & 30,019 & 932 & 5 & 0.1586 & 0.2188 & 0.0061 & -0.1111 \\ 
			Heterophilic&\textbf{Chameleon} & 2,277 & 36,101 & 2,325 & 5 & 0.1039 & 0.2350 & 0.0586 & -0.1128 \\ 
			&\textbf{Squirrel} & 5,201 & 217,073 & 2,089 & 5 & 0.0889 & 0.2239 & 0.0430 & 0.3738 \\
			&\textbf{Texas} & 183 & 325 & 1,703 & 5 & 0.0654 & 0.1077 & 0.0000 & -0.3463 \\ 
			\bottomrule 
	\end{tabular} } 
	\label{tab:app-dataset_statistics}
\end{table}
We evaluate the proposed model on $10$ benchmark datasets spanning a broad range of structural properties. Consistent with the main text, datasets are categorized according to homophily level: (i) homophilic graphs, where edges predominantly connect nodes of the same class, and (ii) heterophilic graphs, where edges frequently connect nodes of different classes.
\paragraph{Homophilic Datasets.}
\begin{itemize}
	\item \textbf{CiteSeer \& PubMed}~\cite{yang2016revisiting}: Standard citation networks evaluated using the Planetoid experimental protocol. Nodes represent scientific documents linked by citation edges, with features represented as bag-of-words vectors. The task entails multi-class node classification across distinct research topics.
	
	\item \textbf{Amazon Computers \& Photo}~\cite{shchur2018pitfalls}: Subgraphs extracted from the Amazon co-purchase network. Nodes correspond to products, while edges denote frequent co-purchase relations. Node features are encoded from review text to facilitate product category classification.
	
	\item \textbf{WikiCS}~\cite{mernyei2020wiki}: A semi-supervised benchmark comprising Wikipedia articles related to computer science. Nodes represent articles connected by hyperlinks. The dataset is characterized by multiple predefined training splits to ensure evaluation stability in node classification.
\end{itemize}

\paragraph{Heterophilic Datasets.}

\begin{itemize}
	\item \textbf{WebKB (Texas, Wisconsin)}~\cite{pei2020geom}: Hyperlink networks of university computer science departments. Nodes represent webpages and edges denote hyperlinks. Features are bag-of-words representations. The task is page classification into categories such as faculty, student, and course.
	
	\item \textbf{WikipediaNetwork (Chameleon, Squirrel)}~\cite{rozemberczki2021multi}: Page-to-page hyperlink graphs from Wikipedia. Node features are derived from textual content. The target variable is discretized page traffic.
	
	\item \textbf{Actor}~\cite{pei2020geom}: The actor-induced subgraph of a film collaboration network. Nodes represent actors and edges capture co-occurrence relationships. The task is categorical node classification.
\end{itemize}
These datasets collectively enable evaluation under varying spectral regimes, including low-frequency-dominant (homophilic) and high-frequency-dominant (heterophilic) settings.

\subsection{Baselines}

We compare DCQ-GNN against a diverse set of spatial and spectral graph neural networks:

\begin{itemize}
	\item \textbf{GCN}~\cite{kipf2017semi}: A first-order low-pass spectral approximation with propagation rule $g(\lambda) = 1 - \lambda$.
	
	\item \textbf{GCNII}~\cite{chen2020simple}: Incorporates initial residual connections and identity mapping to alleviate over-smoothing in deep architectures.
	
	\item \textbf{GAT}~\cite{velivckovic2018graph}: Introduces attention-based anisotropic aggregation.
	
	\item \textbf{JK-Net}~\cite{xu2018representation}: Aggregates multi-scale representations through jump connections.
	
	\item \textbf{H2GCN}~\cite{zhu2020beyond}: Explicitly models higher-order neighborhoods and ego-feature separation, designed for heterophilic settings.
	
	\item \textbf{EvenNet}~\cite{lei2022evennet}: Employs even-degree polynomial filters to achieve spectral symmetry.
	
	\item \textbf{FAGCN}~\cite{bo2021beyond}: Integrates low- and high-frequency components through adaptive gating.
	
	\item \textbf{JacobiConv}~\cite{wang2022powerful}: A high-order polynomial spectral model using Jacobi basis functions.
	
	\item \textbf{BernNet}~\cite{he2021bernnet}: Utilizes Bernstein polynomial bases for flexible spectral approximation.
	
	\item \textbf{AutoSGNN}~\cite{mo2025autosgnn}: An automated framework combining large language models and evolutionary search for propagation design.
	
	\item \textbf{LHK-GNN}~\cite{qin2025localized}: Introduces localized heat kernel filtering to address over-smoothing.
\end{itemize}
This selection covers first-order spatial models, attention-based methods, polynomial spectral filters, and kernel-based approaches.

\subsection{Hyperparameter Optimization.}
We employ the Optuna framework for automated hyperparameter tuning, with the primary objective of maximizing validation accuracy. Each dataset is allocated $100$ trials subject to a maximum time budget of $3,600$ seconds. To ensure empirical rigor, all models undergo the same search protocol. In instances where official implementations provided dataset-specific tuned parameters, those configurations are adopted to ensure a competitive and fair baseline performance.

To reduce computational overhead, we employ Optuna's \texttt{MedianPruner} with 10 startup trials, a 50-epoch warm-up period, and evaluation intervals of 10 epochs.

\begin{itemize}
	\item \textbf{General Training Parameters (Shared across all tuned models):}
	\begin{itemize}
		\item \textbf{Learning rate:} Log-uniform distribution $\in [1\text{e-}4, 1\text{e-}2]$.
		\item \textbf{Weight decay:} Log-uniform distribution $\in [1\text{e-}6, 1\text{e-}2]$.
		\item \textbf{Dropout rate:} Uniform distribution $\in [0.0, 0.9]$.
	\end{itemize}
	\item \textbf{DCQ-GNN Architecture-Specific Parameters:}
	\begin{itemize}
		\item \textbf{Number of layers:} Integer $\in [1, 3]$.
		\item \textbf{Low-pass filter curvature:} Categorical $\in \{\text{Convex (cvx)}, \text{Concave (ccv)}\}$, tuned independently for each layer.
		\item \textbf{High-pass filter curvature:} Categorical $\in \{\text{Convex (cvx)}, \text{Concave (ccv)}\}$, tuned independently for each layer.
	\end{itemize}
\end{itemize}

\subsection{Sensitivity to $\alpha$ and $\beta$}
\label{app:parameter_sensitivity}

The parameter $\alpha$ controls the cutoff proxy $\tilde{\lambda}=2-\alpha$, while $\beta$ rescales the quadratic response subject to $|g(\lambda)|\le 1$. Fig.~\ref{fig:app-filters} visualizes how these parameters modulate the filter profiles.

\begin{figure}[ht]
	\centering
	\includegraphics[width=0.89\linewidth]{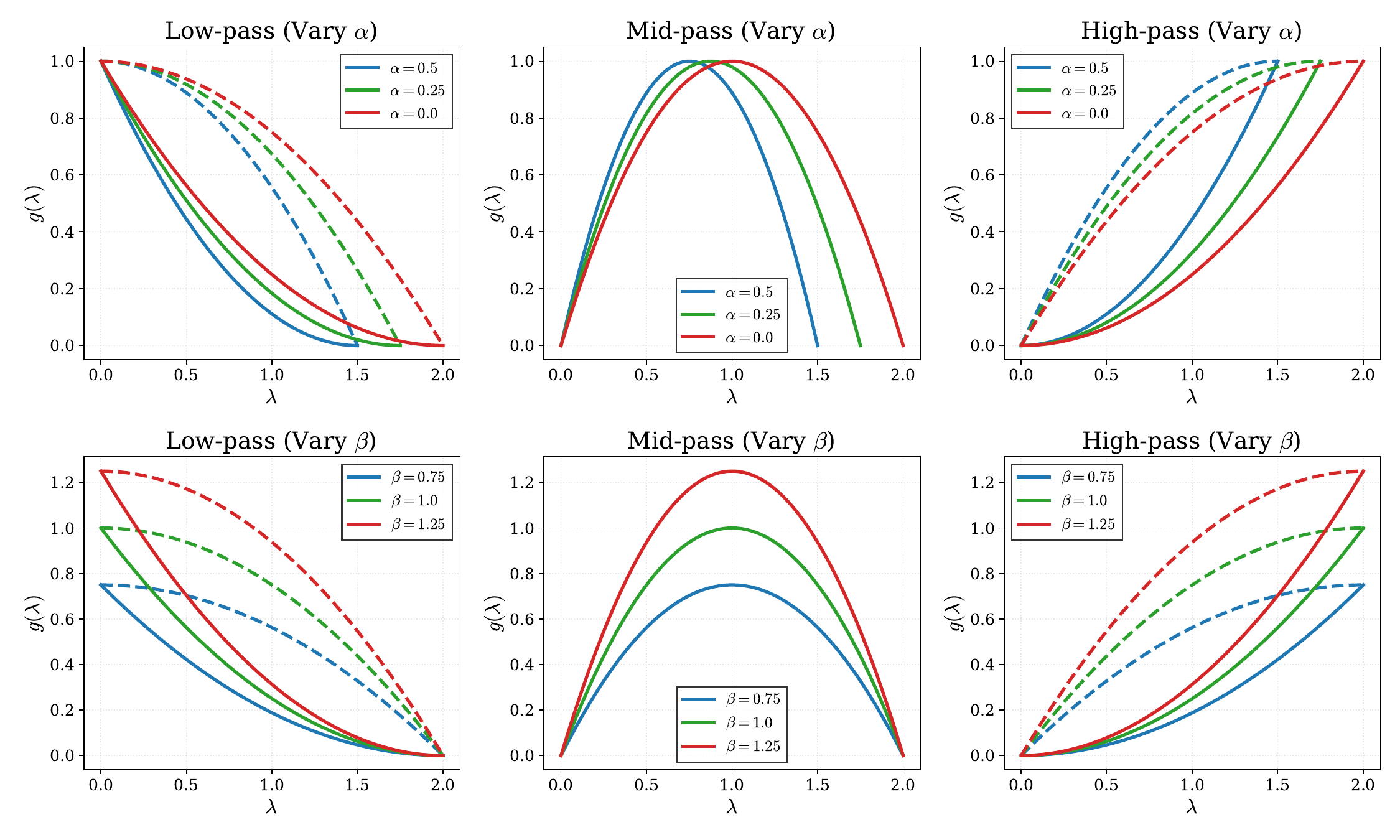}
	\caption{Effect of parameters on spectral response. (Top) $\alpha$ shifts the transition frequency (cutoff), while (Bottom) $\beta$ controls the response magnitude. Convex (solid) and concave (dashed) filters provide complementary spectral sharpness at the same polynomial order.}
	\label{fig:app-filters}
\end{figure}

\subsubsection{Parameter-Specific Weight Decay.}
To avoid unintended regularization of spectral shape parameters, weight decay is applied only to standard network weights. Learnable spectral parameters (cutoff proxies $\alpha$ and magnitude scalars $\beta$), bias terms, and normalization parameters are excluded from weight decay (decay rate set to 0.0). This design ensures that curvature parameters are not artificially suppressed during optimization.

\begin{figure}[htbp]
	\centering
	\includegraphics[width=0.85\linewidth]{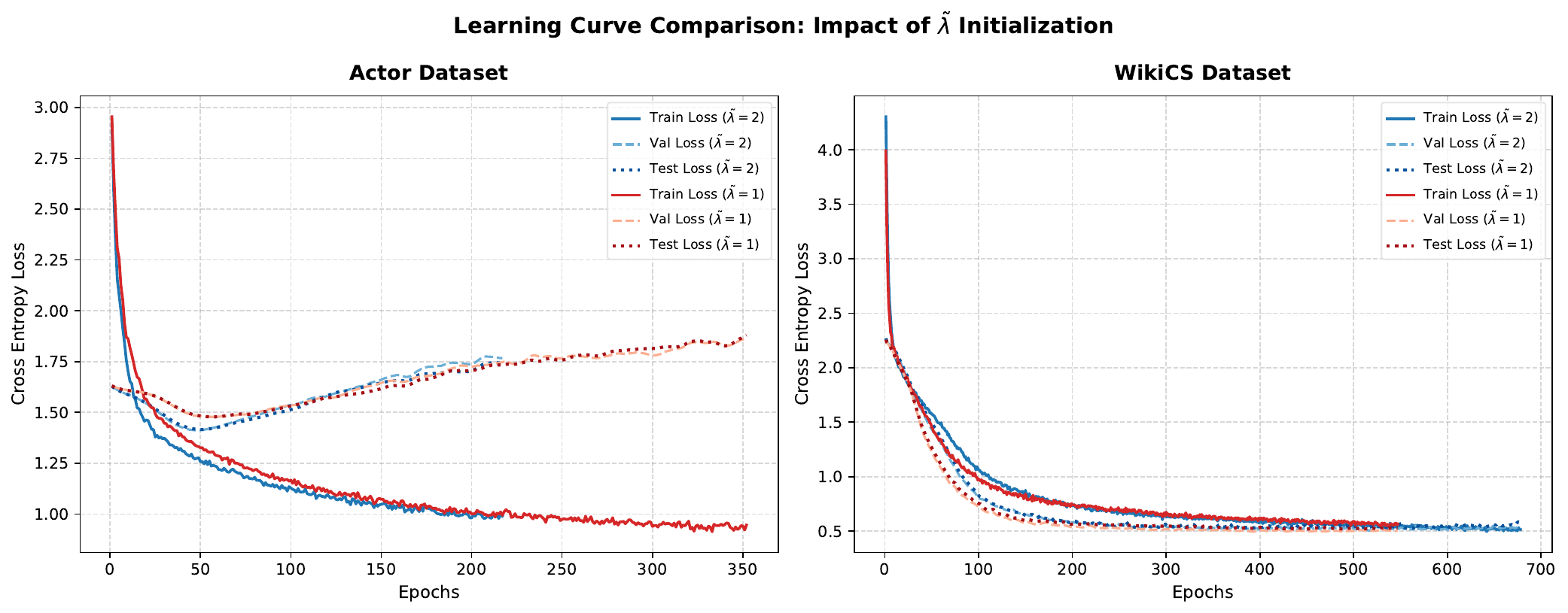} 
	\caption{Training, validation, and test cross-entropy loss trajectories under two cutoff initializations, $\alpha=0$ $(\tilde{\lambda}=2)$ and $\alpha=1$ $(\tilde{\lambda}=1)$, on Actor and WikiCS datasets.}
	\label{fig:learning_curves}
\end{figure}

\paragraph{Convergence Stability Analysis.}
	To examine optimization stability, we track training-loss trajectories under two initialization settings, $\alpha=0$ $(\tilde{\lambda}=2)$ and $\alpha=1$ $(\tilde{\lambda}=1)$, as shown in Figure~\ref{fig:learning_curves}. On WikiCS, the two trajectories are nearly overlapping, decrease smoothly, and stabilize after approximately 400--500 epochs, indicating that optimization is insensitive to the initial cutoff proxy on this homophilic graph. On Actor, the two settings also follow similar trajectories, with the cross-entropy loss decreasing rapidly in the early stage and increasing after about 50 epochs. This pattern suggests that validation loss is not always aligned with final classification accuracy on heterophilic graphs, where confidence calibration and decision-boundary refinement may evolve differently. We therefore select checkpoints by validation accuracy rather than validation loss, which is consistent with the node-classification objective used in our evaluation.

\begin{table}[htbp]
	\centering
	\caption{Node classification accuracy under different initializations of the filter cutoff parameter $\alpha$. Relative changes are computed against the original initialization.}
	\label{tab:init_sensitivity}
	\begin{tabular}{lcc}
		\toprule
		\textbf{Initialization Configuration} & \textbf{WikiCS} & \textbf{Actor} \\
		\midrule
		Original ($\alpha=0$) & $\mathbf{85.50 \pm 0.82}$ & $\mathbf{35.63 \pm 1.39}$ \\
		Altered ($\alpha=1$)  & $84.84 \pm 0.49\ (\downarrow 0.77\%)$ & $33.22 \pm 1.00\ (\downarrow 6.76\%)$ \\
		\bottomrule
	\end{tabular}
\end{table}
\paragraph{Initialization Sensitivity of Accuracy.}
	Although the loss curves remain stable under both initializations, Table~\ref{tab:init_sensitivity} shows that classification accuracy exhibits dataset-dependent sensitivity. On WikiCS, changing the initialization from $\alpha=0$ to $\alpha=1$ causes only a $0.77\%$ relative drop, suggesting that the homophilic smoothing signal is robust to the initial cutoff proxy. On Actor, the same change leads to a larger decrease of $6.76\%$. Since heterophilic graphs rely more heavily on non-low-frequency components, initializing with $\tilde{\lambda}=2$ provides a broader initial spectral range and facilitates the subsequent adaptation of high-frequency channels.

\subsection{Robustness under Adversarial Perturbations (RQ2 Extension)}
\begin{table}[h] 
	\centering 
	\caption{Node classification accuracy under the DICE structural attack on \textbf{PubMed}. $\downarrow \Delta\%$ denotes the relative accuracy drop from the clean graph. The best rounded accuracy at each perturbation ratio is highlighted in bold.}
	\resizebox{\textwidth}{!}{
		\begin{tabular}{lcccc} 
			\toprule 
			& \multicolumn{4}{c}{\textbf{DICE Attack Perturbation Ratio}} \\ \cmidrule(lr){2-5} \textbf{Model} & \textbf{0\% (Clean)} & \textbf{5\%} & \textbf{10\%} & \textbf{20\%} \\ 
			\midrule 
			GCN & $88.53 \pm 0.28$ & $82.77 \pm 0.62 (\downarrow6.51\%)$ & $77.64 \pm 0.59 (\downarrow12.30\%)$ & $67.44 \pm 0.73 (\downarrow23.82\%)$ \\ 
			GAT & $88.12 \pm 0.48$ & $82.82 \pm 0.47 (\downarrow6.01\%)$ & $77.88 \pm 0.68 (\downarrow11.62\%)$ & $69.04 \pm 1.06 (\downarrow21.65\%)$ \\
			FAGCN & $89.21 \pm 0.55$ & $85.78 \pm 0.46 (\downarrow3.85\%)$ & $83.99 \pm 0.84 (\downarrow5.85\%)$ & $78.41 \pm 1.18 (\downarrow12.11\%)$ \\
			BernNet & $89.46 \pm 0.33$ & $87.62 \pm 0.45 (\downarrow2.06\%)$  & $86.17 \pm 0.29 (\downarrow3.68\%)$ & $82.39 \pm 0.23 (\downarrow7.90\%)$\\ 
			JacobiConv & $89.44 \pm 0.10$ & $\mathbf{88.37 \pm 0.18 (\downarrow1.20\%)}$ & $87.36 \pm 0.38 (\downarrow2.32\%)$ & $82.58 \pm 0.22 (\downarrow7.67\%)$ \\  		
			\hline
			\textbf{DCQ-GNN (ours)} & $\mathbf{89.62 \pm 0.49}$ & $ \mathbf{88.37 \pm 0.41 (\downarrow1.39\%)}$  & $\mathbf{87.73 \pm 0.39 (\downarrow2.11\%)}$ & $\mathbf{85.63 \pm 0.68 (\downarrow4.45\%)}$\\
			\bottomrule 
	\end{tabular}}
	\label{tab:app-reverse_dice_performance} 
\end{table}
We further evaluate robustness on PubMed under DICE structural perturbations. PubMed is a homophilic graph ($h=0.79$), so heterophilic edge perturbations directly disrupt the label-consistent neighborhood structure. As shown in Table~\ref{tab:app-reverse_dice_performance}, first-order spatial models degrade rapidly: at the $20\%$ perturbation ratio, GCN and GAT drop by $23.82\%$ and $21.65\%$, respectively. Polynomial spectral baselines are more stable, with BernNet and JacobiConv decreasing by $7.90\%$ and $7.67\%$. DCQ-GNN obtains the smallest degradation at $20\%$, retaining $85.63\%$ accuracy with a relative drop of $4.45\%$. At $5\%$, DCQ-GNN ties JacobiConv in rounded accuracy ($88.37\%$), while at $10\%$ and $20\%$ it achieves the best accuracy among all compared methods. These results support the robustness of curvature-controlled quadratic filtering under adversarial topology modification in homophilic graphs.

\subsection{Spectral Visualization (RQ5)}
\begin{figure}[h]
	\centering
	\includegraphics[width=0.97\linewidth]{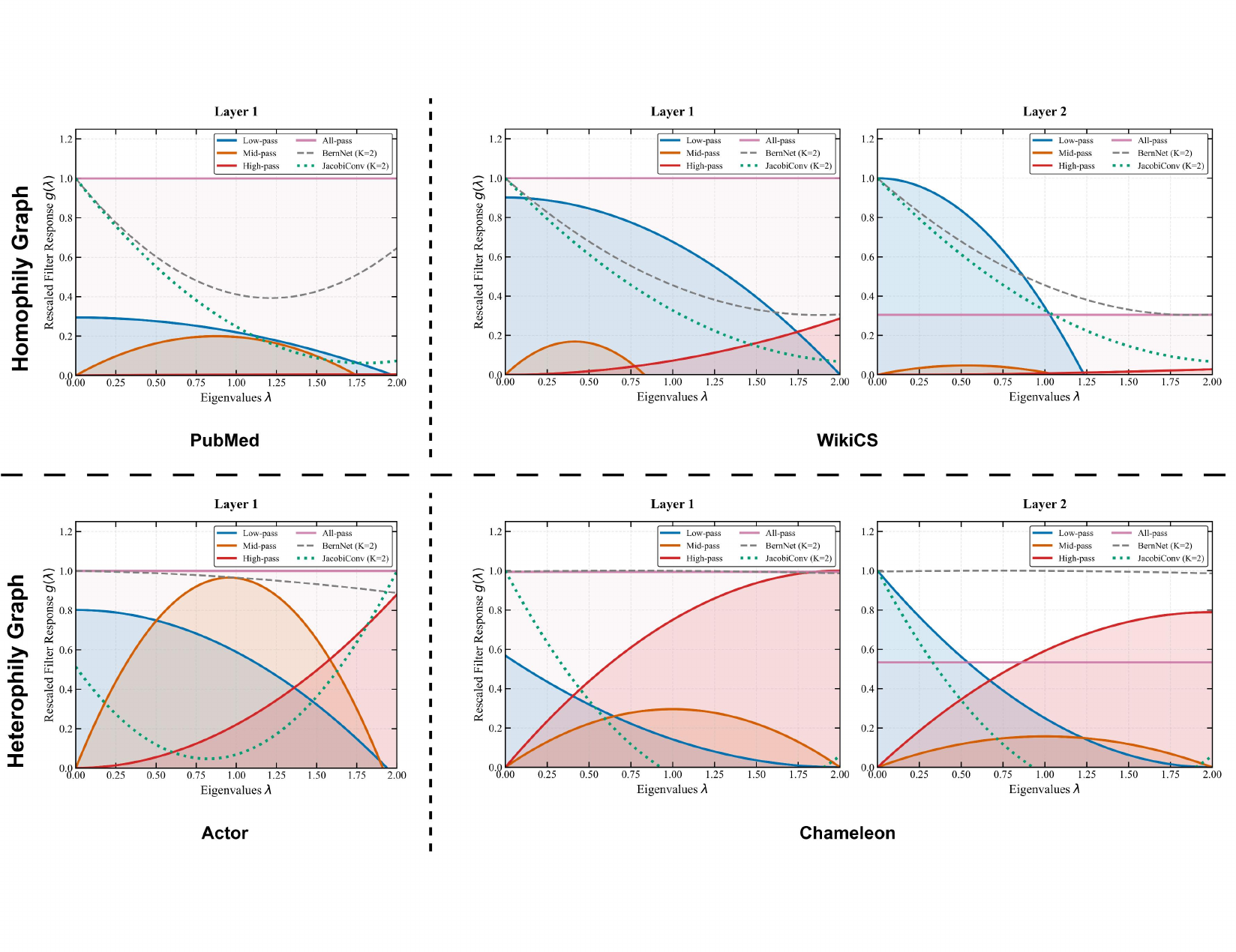}
	\caption{Learned layer-wise spectral responses of DCQ-GNN compared with second-order ($K=2$) polynomial baselines on homophilic (PubMed, WikiCS) and heterophilic (Actor, Chameleon) graphs. Shaded regions denote the explicit frequency channels learned by DCQ-GNN.}
	\label{fig:app-filter_profiles}
\end{figure}

To further analyze the spectral behavior of the proposed model across varying homophily regimes, we visualize the learned spectral responses of the DCQ-GNN filter bank alongside established polynomial baselines in Fig.~\ref{fig:app-filter_profiles}. To ensure a fair comparison in terms of model complexity, BernNet and JacobiConv are restricted to second-order polynomial capacity ($K=2$), matching the quadratic formulation of DCQ-GNN.

As shown in Fig.~\ref{fig:app-filter_profiles}, DCQ-GNN exhibits topology-dependent spectral adaptation. In homophilic graphs (PubMed and WikiCS), the learned response is dominated by the convex low-pass component, emphasizing low-frequency smoothing, while the high-pass channel displays limited activation. In contrast, for heterophilic graphs (Actor and Chameleon), higher-frequency components become more pronounced, indicating increased reliance on disassortative structural signals.

The visualization also reveals the behavior of standard polynomial approximations under low-order constraints. On the heterophilic Chameleon dataset, JacobiConv converges to a monotonic low-pass profile under the $K=2$ setting, limiting its ability to emphasize higher-frequency bands. Similarly, BernNet learns an approximately flat response on the Actor dataset. While such a flat profile is consistent with empirical observations that structure-agnostic MLP-style models can perform competitively on certain heterophilic graphs, it indicates limited spectral differentiation at low polynomial order.

In contrast, DCQ-GNN decomposes the spectrum into complementary frequency channels rather than fitting a single global polynomial. This channel-wise decomposition enables the model to combine frequency-specific responses through node-adaptive gating, facilitating flexible spectral discrimination without increasing polynomial degree.

\subsection{Node Classification with Dense Split (Balanced Classes, RQ6)}
\begin{table*}[ht] 
	\centering
	\caption{Node classification accuracy (mean $\pm$ standard deviation) on four benchmark datasets under the dense split setting. The best result is highlighted in \textbf{bold} and the second best is \underline{underlined}.}
	\resizebox{\textwidth}{!}{ 
		\begin{tabular}{lcccccc} 
			\toprule
			\textbf{Datasets}
			& \textbf{Photo} &  \textbf{Computers} & \textbf{Chameleon} & \textbf{Squirrel} & \textbf{Avg. Rank} \\ 
			\textbf{Homophily (Node)} & $h=0.83$ & $h=0.78$ & $h=0.10$ & $h=0.08$ & {} \\ 
			\midrule 
			GCN & $93.86 \pm 0.56$ & $88.80 \pm 0.27$ & $44.96 \pm 2.95$ & $32.14 \pm 0.94$ & 7.50 \\
			GCNII & $\mathbf{95.74 \pm 0.38}$ & $\underline{90.48 \pm 0.27}$ & $54.07 \pm 1.20$ & $37.34 \pm 1.80$ & 3.75 \\
			GAT & $94.80 \pm 0.44$ & $90.25 \pm 0.79$ & $45.82 \pm 2.82$ & $33.87 \pm 0.73$ & 6.00 \\
			BernNet & $93.63 \pm 0.35$ & $87.64 \pm 0.44$ & $68.29 \pm 1.58$ & $51.35 \pm 0.73$ & 6.00 \\
			AutoSGNN & $95.12 \pm 0.37$ & $90.06 \pm 0.63$ & $66.30 \pm 1.94$ & $56.58 \pm 1.14$ & 4.50 \\
			LHK-GNN & $95.10 \pm 0.31$ & ${90.12 \pm 0.29}$ & $\underline{70.67 \pm 0.92}$ & $56.34 \pm 0.87$ & 4.00 \\
			\hline
			\textbf{DCQ-GNN (ours)} & $95.41 \pm 0.53$ & $\mathbf{90.95 \pm 0.65}$ & $68.16 \pm 1.39$ & $\underline{56.61 \pm 1.49}$ & $\bf \underline{2.50}$ \\
			\bottomrule
	\end{tabular}}
	\label{tab:app-dense_split_results}
\end{table*}
In the main paper (Table~\ref{tab:main_results}), models are evaluated under SRS splits without enforcing class balance. In contrast, Table~\ref{tab:app-dense_split_results} reports results under the dense split protocol.

To provide a complementary evaluation and enable comparison with recent studies, we additionally evaluate DCQ-GNN under a class-balanced ``dense split'' protocol. In this setting, the training set allocates an equal number of nodes to each class (specifically, $N_{train} = N \times train\_ratio / C$ nodes per class). Although this protocol modifies the inherent class imbalance, it is widely adopted in contemporary spectral GNN literature.

For completeness, we include two recent baselines in this evaluation: AutoSGNN and LHK-GNN. As reported in Table~\ref{tab:app-dense_split_results}, DCQ-GNN achieves the best overall average rank (2.50) across the evaluated datasets and outperforms AutoSGNN and LHK-GNN on average.  Under this protocol, DCQ-GNN achieves competitive performance across all datasets and obtains the best average rank. Overall, these results indicate that the proposed convex--concave quadratic filter bank maintains competitive performance under both naturally imbalanced and class-balanced training settings.

\section{Related Work}\label{app:related}
We review prior work most relevant to DCQ-GNN from three perspectives: spectral graph convolution, adaptive polynomial spectral filtering, and filter-bank mechanisms for learning on heterophilic graphs.

\subsection{Spectral Graph Convolutional Networks}

Spectral graph neural networks originate from graph signal processing, which extends classical signal filtering to irregular graph domains \cite{shuman2013emerging}.
Early formulations defined convolution in the graph Laplacian eigenbasis \cite{bruna2013spectral}, but their reliance on eigendecomposition limited scalability.
ChebyNet \cite{defferrard2016convolutional} alleviates this issue by approximating spectral filters using Chebyshev polynomials, enabling localized and efficient propagation without explicit eigenvectors.
GCN \cite{kipf2017semi} further simplifies this formulation to a first-order polynomial approximation, resulting in an efficient and widely adopted low-pass operator.

Many spatial GNN architectures can be interpreted within this spectral framework and exhibit an inherent low-frequency bias, including GraphSAGE \cite{hamilton2017inductive}, GAT \cite{velivckovic2018graph}, JK-Net \cite{xu2018representation}, and GCNII \cite{chen2020simple}.  While such low-pass smoothing behavior is well suited to homophilic graphs, it is often suboptimal in heterophilic settings, where adjacent nodes tend to have dissimilar labels and discriminative information may reside in higher-frequency components \cite{balcilar2021analyzing,zhu2020beyond}.

\subsection{Adaptive Polynomial Spectral Filters}

To overcome the limitations of fixed low-pass filtering, several approaches have explored adaptive polynomial spectral responses.
GPR-GNN \cite{chien2021adaptive} learns propagation coefficients inspired by generalized PageRank \cite{brin1998anatomy}, allowing data-dependent aggregation across multiple hop distances. BernNet \cite{he2021bernnet} adopts Bernstein polynomial bases with constrained coefficients to improve numerical stability while supporting diverse spectral profiles.  JacobiConv \cite{wang2022powerful} employs orthogonal Jacobi polynomials to mitigate instability in higher-order expansions, and UniFilter \cite{huang2024universal} further enhances adaptivity by dynamically constructing spectral bases tailored to graph heterophily.

Although these methods significantly increase spectral flexibility, achieving sharper frequency selectivity typically requires higher polynomial orders.
This may enlarge the hypothesis space, increase sensitivity to noise, and incur additional computational cost due to repeated propagation steps \cite{levie2018cayleynets}.
In contrast, our work investigates whether a strictly second-order design can capture much of the practical benefit of spectral adaptivity, while offering improved stability and parameter efficiency.

\subsection{Filter Banks for Heterophilic Graphs}

A complementary line of research addresses heterophily by decomposing graph signals into multiple frequency components and combining them through learned mixing mechanisms.
FAGCN \cite{bo2021beyond} integrates low- and high-frequency information via frequency-adaptive gating, while related multi-channel formulations have been explored in AutoGCN \cite{wu2022beyond} and Adaptive Channel Mixing (ACM) \cite{luan2022revisiting}.
More recently, Zheng et al.~\cite{zheng2021framelets} introduced spectral tight framelet filter banks composed of one low-pass and multiple high-pass filters, enabling principled frequency separation with stable reconstruction guarantees.

Despite their effectiveness, most existing filter-bank approaches rely on linear filters as their fundamental building blocks. From a spectral viewpoint, repeated linear low-pass propagation exhibits no intrinsic curvature and therefore provides limited capability for suppressing structurally induced interference. Consequently, perturbations or corrupted components may be repeatedly propagated and even amplified \cite{huang2023robust,zugner2018adversarial}. In contrast, DCQ-GNN employs quadratic filters with an explicit convex--concave decomposition, thereby introducing controllable curvature into the spectral response. As shown in our theoretical analysis, this curvature enables selective attenuation of interference-prone frequency components while preserving informative signal bands. Consequently, DCQ-GNN realizes a structured filter-bank architecture that yields sharper spectral discrimination and improved robustness, without relying on high-order polynomial expansions or sacrificing the computational efficiency of low-order message passing.

Several studies have explored multi-channel filtering to address graph heterophily. Although Graph Framelets provide powerful multi-scale frequency decompositions, they typically require complex frame construction and high-order spectral expansions. In contrast, DCQ-GNN provides a lightweight quadratic alternative: by exploiting curvature polarity rather than multi-level scaling, the proposed model achieves comparable spectral separation with a strictly two-hop computational footprint.

\subsection{Connections to Continuous Diffusion and Graph Transformers}
\paragraph{Continuous Graph Diffusion.}
	GRAND~\cite{chamberlain2021grand} and GraphCON~\cite{rusch2022graph} define node
	dynamics via neural ODEs on graphs, with stability certified through Lyapunov
	analysis of the continuous flow. The second-order filter $g(\lambda)=a_0+a_1\lambda
	+a_2\lambda^2$ admits a discrete-time interpretation as a two-step Runge--Kutta
	approximation of such diffusion: $a_1$ governs the diffusion rate, while $a_2$
	introduces a curvature correction analogous to the momentum coupling in GraphCON.
	Theorem~\ref{app:thm-strucprtur} can thus be viewed as the spectral counterpart of those
	Lyapunov certificates. Both characterize bounded output variation under bounded
	structural perturbation, whereas our result operates on filter coefficients rather
	than ODE parameters.
	\paragraph{Graph Transformers.}
	Transformer-based architectures such as Graphormer~\cite{ying2021transformers} and
	GPS~\cite{rampavsek2022recipe} attend globally over node pairs, achieving strong
	performance at the cost of $\mathcal{O}(N^2)$ attention complexity or explicit
	structural positional encodings. DCQ-GNN targets a complementary regime in which
	$\mathcal{O}(M)$ polynomial propagation is preferred for efficiency and spectral
	interpretability. We therefore view Graph Transformers as a distinct design family
	rather than direct competitors.

\end{document}